\documentclass[preprint,11pt,authoryear]{elsarticle}
\usepackage{amsmath,amssymb,amsthm}
\usepackage{graphicx}
\usepackage{booktabs}
\usepackage{float}
\usepackage{longtable}
\usepackage{etoolbox}
\AtBeginEnvironment{table}{\renewcommand{\baselinestretch}{1.0}\selectfont}
\usepackage{array}
\usepackage[T1]{fontenc}
\usepackage{xr-hyper}
\usepackage{hyperref}
\hypersetup{colorlinks=true,linkcolor=[rgb]{0.1,0.25,0.55},citecolor=[rgb]{0.1,0.25,0.55},urlcolor=[rgb]{0.1,0.25,0.55}}
\newtheorem{proposition}{Proposition}
\newtheorem{assumption}{Assumption}
\newtheorem{lemma}{Lemma}
\newcommand{\pp}{\,pp}

\usepackage{amsmath,amsfonts,bm}

\def\eqref#1{equation~\ref{#1}}

\def\1{\bm{1}}

\DeclareMathAlphabet{\mathsfit}{\encodingdefault}{\sfdefault}{m}{sl}
\SetMathAlphabet{\mathsfit}{bold}{\encodingdefault}{\sfdefault}{bx}{n}

\usepackage[switch]{lineno}
\makeatletter
\def\ps@pprintTitle{\let\@oddhead\@empty\let\@evenhead\@empty\let\@oddfoot\@empty\let\@evenfoot\@empty}
\makeatother
\usepackage[section]{placeins}
\begin{document}

\begin{frontmatter}
\title{Separating Decision Time from Decision Quality in the Real-Time Gap of Distilled Deciders: Evidence from a Game and a Conveyor Simulator}
\author[a]{Chihoon Shin\corref{cor1}}
\ead{chihoon@mssj.ai}
\author[a]{Junyeong Lee}
\ead{junyeong@mssj.ai}
\author[b]{Kihyeok Jeong}
\ead{knp6464@korea.ac.kr}
\author[c]{Wonok Kwon}
\ead{happy@etri.re.kr}
\cortext[cor1]{Corresponding author.}
\affiliation[a]{organization={Myeongseongsimjae AX Institute},city={Yeongyang-gun},state={Gyeongsangbuk-do},country={Republic of Korea}}
\affiliation[b]{organization={Department of Computer Science and Software Engineering, Korea University},city={Seoul},country={Republic of Korea}}
\affiliation[c]{organization={Electronics and Telecommunications Research Institute (ETRI)},city={Daejeon},country={Republic of Korea}}
\begin{abstract}
Real-time agents are often evaluated with the world paused, or with decision time rounded to whole ticks (tick conversion). Tick conversion predicts almost no loss for a learned decider whose hit accuracy falls by 14.8 percentage points (pp) on the same games in asynchronous play. We instead split the decider's gap to a zero-latency teacher into time and quality components. In ViZDoom, we train a small decider by imitating a scripted teacher, run the game on wall-clock time, and add a control in which the teacher waits for a call to the decider's server before deciding. On a Windows host, three preregistered studies put the time component at 6.5--8.7\pp{} and the quality component at 6.3--8.6\pp. On a Linux host that skips 0.08--0.09\% of ticks, the time component disappeared ($-$0.1\pp; paired 95\% bootstrap interval [$-$0.4, 0.0]) while the quality component remained (5.6\pp{} [3.7, 7.5]). Retraining on teacher-labelled states from the decider's own play (DAgger) improved it on new games on both hosts (2.6\pp{} [0.4, 4.9] and 3.1\pp{} [1.1, 5.2]), a preregistered partial success. Imposing one delay schedule on every arm in live play left a quality component of 5.8\pp{} [4.0, 7.5] on 100 new games, and retraining cut the decider's disagreement with the teacher on its own states from 22.7\% to 14.9\%. In a conveyor simulator with a 400\,ms deadline, delay past it erased the quality component. A latency-matched control whose delays match the decider's shows which remedy to try.
\end{abstract}
\begin{keyword}
real-time evaluation \sep inference latency \sep imitation learning \sep DAgger \sep covariate shift \sep ViZDoom \sep real-time systems
\end{keyword}
\end{frontmatter}

\section{Introduction}
\label{sec:intro}

In real-time systems the world does not wait: a game engine keeps running, and a part on an inspection line reaches its diverter whether or not a decision has arrived. Language-model agents are being tried as controllers in such live loops, games among them \citep{park2025,golchinfar2026}. When such a decider loses skill in real time, two different things may have gone wrong: its decisions may arrive too late, or they may be worse decisions. The remedies differ. The first calls for faster inference, prediction or latency-aware policies \citep{firoiu2018,chen2021}; the second calls for better training. Common evaluation approaches do not separate them: paused evaluation removes time; tick conversion, which turns measured latency into held engine ticks, models time but not what the host does; and game studies report latency beside the score.

We ask: \emph{when a fast distilled decider loses skill in real-time play, how much of the loss is decision time and how much is decision quality, and can training reduce the quality component?} Because the answer turns out to depend on the host and the scene, we also ask which part of each component the host, the scene and the decider set. We use a first-person game as a controlled testbed, and a conveyor-inspection simulator with a hard deadline as a second environment.

Our answer rests on a latency-matched control: a teacher arm that decides with the teacher's rules but waits for a real call to the decider's server before every decision. It is intended to reproduce the decider's waiting time while retaining the teacher's decisions. Comparing this latency-matched teacher with the zero-latency teacher gives the time component. Comparing the latency-matched teacher with the decider gives the quality component; Figure~\ref{fig:teaser} shows both components in the three studies on host 1. Both components split the decider's gap to the zero-latency teacher; the 14.8\pp{} of the abstract is a different contrast, between two evaluations of the same decider (Section~\ref{sec:shortcuts}). This quality component also includes the effect of replacing exact coordinates with a textual description of the state; for hit accuracy we find no measurable input loss (Section~\ref{sec:hosts}).

\begin{figure}[!tb]
\centering
\includegraphics[width=\linewidth]{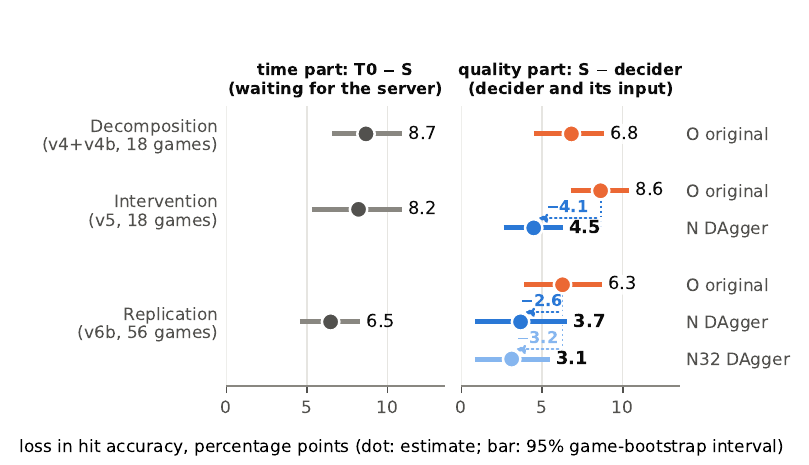}
\caption{Time and quality components in three studies on host 1 (v4+v4b, v5, v6b; 18, 18 and 56 distinct games). Left: time component T0 $-$ S. Right: quality component S $-$ decider for O, N and N32, including the input change and, in v5, imperfect latency matching; arrows: reduction from O. Bars: paired 95\% game-bootstrap intervals. Symbols: Table~\ref{tab:notation}.}
\label{fig:teaser}
\end{figure}

On the first host both components turn out to be large and of similar size. On a second host that almost never skips a tick, the time component disappears and the quality component stays (Section~\ref{sec:hosts}): in defend\_the\_center the time component tracked the host while the quality component stayed with the decider; in deadly\_corridor, where the scripted teacher is weaker, the quality component reversed. We interpret this disagreement as consistent with covariate shift, which imitation learning is known to suffer \citep{ross2011}. Because the quality component persists where almost no ticks are skipped, the shift is not produced by skipped ticks alone; we attribute the remainder to the decider's errors leading it into states absent from the teacher-played data, which we did not measure directly on the second host.

Collecting those states in the live loop and labelling them with the teacher reduces the quality component. The reduction replicates on 56 new distinct games for two training seeds on the first host and for one on the second, and on the first host it is not explained by data volume: adding the same number of teacher-played states instead did not give the same gain (Sections~\ref{sec:control} and~\ref{sec:replication}). The time component is untouched, as it must be, since no learning enters the teacher arms; reducing it would require changes to inference, the execution loop or the policy's handling of delay, which we do not attempt. Our contribution is therefore an evaluation method and what it reveals, not a method that shortens decisions. The method needs a teacher that can be run under the decider's timing, such as the scripted teachers used here; given one, it indicates whether to make the decider faster or to retrain it before either is attempted. We test it in imitation-learning evaluations with scripted teachers; whether the remedy it indicates is the better one for other decider families is untested.

Our contributions are as follows.
\begin{itemize}
\item A \textbf{latency-matched decomposition} of a real-time gap into a time and a quality component, with a check, for every teacher and decider run pair, of whether both lost a similar share of engine ticks to waiting (ratio within 0.8--1.25; pairs outside the band are flagged, not dropped), and an \textbf{analysis} in a model of asynchronous play: under a Lipschitz assumption with unknown constant the manipulation check becomes a bias bound, and, for additive return, the quality component is bounded by the decider's disagreement with the teacher on its own asynchronous states (Sections~\ref{sec:method} and~\ref{sec:theory}, proofs in Section~\ref{app:proofs}).
\item A preregistered \textbf{measurement} of both components for a distilled decider in asynchronous ViZDoom play, reported with game-level and run-level uncertainty and repeated on two hosts and two scenes, with a decision-period study, a live study that imposes one delay schedule on every arm, and a controlled-timing emulator, its fidelity confirmed on fresh seeds, that measures the share-matching bias directly (Sections~\ref{sec:decomp}, \ref{sec:live}, \ref{sec:hosts} and~\ref{sec:controlled}).
\item A \textbf{diagnosis} of the quality component as consistent with covariate shift, supported by the decider's measured disagreement with the teacher on its own states, and an \textbf{intervention}, on-policy asynchronous distillation, evaluated against the original decider in interleaved triads with a data-volume control and a replication on 56 new distinct games. The intervention met its preregistered criteria only in part: the remaining quality component stayed above 2\pp{} (Sections~\ref{sec:diag}--\ref{sec:replication}).
\item \textbf{Evaluation lessons}: tick conversion predicts almost no loss where the observed loss is 14.8\pp{}, concurrent evaluation on one host fails a preregistered calibration, and in a conveyor-inspection simulator a deadline turns the time component into a step whose edge the host's latency tail moves (Sections~\ref{sec:factory} and~\ref{sec:shortcuts}).
\end{itemize}

\section{Related work}
\label{sec:related}

\paragraph{Acting while the world moves} Real-time reinforcement learning starts from the observation that the state keeps changing while an action is selected \citep{ramstedt2019}; neither the Markov nor the semi-Markov decision process captures a state that changes during the agent's own computation \citep{travnik2018}. Agents held to human reaction times lose performance quickly \citep{firoiu2018}.

Delay-aware formulations treat delay as a known constant, as actions that execute a fixed number of steps later \citep{derman2021} or through state augmentation \citep{chen2021}. On physical systems the timing itself is uncertain and learning is sensitive to setup details that are rarely reported \citep{mahmood2018,dulacarnold2019}. Benchmarks on real robots therefore specify action and read-write cycle times. They also separate environment and agent computation into different processes to control delays \citep{mahmood2018benchmarking}. Sequential learning has been found to degrade relative to asynchronous learning when updates are expensive \citep{yuan2022async}.

Work on concurrent control, observation delay and language-model game agents is reviewed in Section~\ref{app:from-related}.

\paragraph{Fast-slow and distilled agents} Several systems couple a slow reasoner with a fast controller. A reasoning vision-language model that needs about 1.5\,s per response, too slow for a 15\,Hz control loop, is paired with a reactive model that answers in milliseconds \citep{li2026}. These works report scores and latencies side by side without attributing the loss between them. Our intervention uses dataset aggregation (DAgger) \citep{ross2011}: we collect states reached by the decider during asynchronous play and label them with the teacher's decisions for training. We use DAgger unchanged and apply it inside the asynchronous loop, where it can reduce the quality component but not the time component.

\paragraph{Paused, converted and real-time evaluation} Pausing the game during inference turns a real-time game into a turn-based one; vision-language models score higher on paused than on real-time versions of the same games, although both versions stay nearly unsolved \citep{zhang2025videogamebench}, reasoning latency limits decision updates to second-level intervals \citep{chen2024can}, and a real-time variant of a paused benchmark makes latency part of task performance \citep{ouyang2026gameworld}. Tick conversion, as we use the term, belongs to this family; we test it against asynchronous play rather than propose it. Closely related are frame skipping and action repetition: static frame skip repeats an action for a fixed number of frames \citep{srinivas2016dynamic}, action repetition requires open-loop action sequences but can shorten the effective horizon \citep{kalyanakrishnan2021an}, and learning the action duration adaptively did not beat a well-chosen fixed value in a fighting game \citep{nguyen2026for}. A skipped tick in asynchronous play is an action repetition that the agent did not choose.

\paragraph{Evaluation methodology} Shared protocols \citep{bellemare2013,machado2018}, interval estimates and seed counts \citep{henderson2018,agarwal2021,colas2018}, sources of benchmark variance \citep{bouthillier2021}, environment diversity \citep{cobbe2019}, preregistration \citep{hofman2023} and negative results \citep{karl2024} all bear on how agents should be compared. Prior work models delays, designs systems to tolerate latency and reports real-time performance. We measure how much of a real-time performance loss is associated with latency and how much with decision quality. We add three checks specific to real time: a latency-matched arm, distinct games as the resampling unit, and a calibration of whether a host can run evaluations concurrently.

\section{Setting}
\label{sec:setting}

\paragraph{Task and agents} We compare a scripted teacher that reads exact enemy coordinates with a learned decider that selects actions from short state sentences. The task uses aiming, firing and strafing actions in ViZDoom defend\_the\_center \citep{kempka2016} with two strafing buttons added, giving seven composite actions: turn left, turn right, fire, strafe left, strafe right, fire while strafing left, and fire while strafing right. Figure~\ref{fig:env} shows the game and the conveyor simulator used as a second environment (Section~\ref{sec:factory}).

\paragraph{Teacher} The teacher is a script that reads exact coordinates: it targets the nearest enemy, fires only when the enemy lies inside an aim cone of $\arctan(16/d)$ degrees clipped to 1--6 degrees and at least 14 ticks after its previous shot, and alternates strafing direction every 10 ticks.

\paragraph{Decider} The decider is the learned action-selection model. It never sees coordinates. A perception script writes a short state sentence (up to three enemies with side, 3-degree angle bin and 100-unit distance bin; enemy count; strafing phase; gun ready; ammunition in 5-bullet bins), which the decider encodes with hashed n-grams into 1,785 features. The decider has 265,665 parameters. It is trained by imitation for 2,500 steps on 300 teacher games from training seeds. On a CPU, it decides in about 2\,ms (p50 2.14\,ms, p90 15.94\,ms).

\paragraph{Environment} The engine advances 35 ticks per second, one every $\Delta = 28.6$\,ms; the pistol holds 26 bullets with no refill; an episode ends after 2100 ticks or at death.

Figure~\ref{fig:arch} shows the evaluation loop, the arms and the training pipeline, and Figure~\ref{fig:design} the arms of the latency-matched decomposition. Figure~\ref{fig:decider} shows the decider's network, and Section~\ref{app:loopsupp} the loop of the earlier tick-converted evaluations.

\begin{table}[!htbp]
\centering
\caption{Arms and symbols. Colours of these arms in the data figures: T0 dark gray, S gray, O orange, N dark blue, N32 light blue; other series use colours defined in their own figures.}
\label{tab:notation}
\small
\begin{tabular}{@{}lp{0.74\linewidth}@{}}
\toprule
Symbol & Meaning \\ \midrule
T0 & teacher with no added latency \\
S & the same teacher waiting for the decider's server before acting (latency-matched teacher) \\
SS & the teacher deciding from the state sentence the decider reads (v10) \\
D & a learned decider on the server, in Sections~\ref{sec:method} and~\ref{sec:theory} (O, N, N32 or M) \\
O & original imitation decider \\
N & decider after two on-policy DAgger rounds, in which the decider plays, the teacher labels the states it reaches, and the decider is retrained on them; N1 after one round \\
N32 & the same data as N, trained with seed 32 \\
M & data-volume control: the original recipe plus the same number of teacher-played rows \\
H$k$ & O holding each action for $k$ ticks after deciding (v15) \\
L8 & O acting every tick on the decision it made 8 ticks earlier (v15) \\
pp & percentage points \\
distinct games & seeds whose teacher plays identical games form one unit; all intervals resample these units \\
\bottomrule
\end{tabular}
\end{table}

\begin{figure}[!htbp]
\centering
\includegraphics[width=\linewidth]{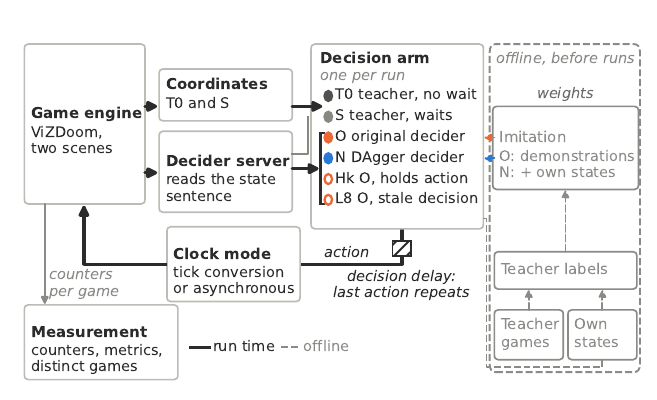}
\caption{The evaluation loop. The engine sends exact coordinates to the teacher arms and a state sentence to the decider's server. Tick conversion holds the previous action for $m(L)$ ticks; asynchronous play runs at 35 ticks per second and repeats the last action while a decision is pending (hatched bar). Dashed: offline training data. Symbols: Table~\ref{tab:notation}.}
\label{fig:arch}
\end{figure}

\paragraph{Two clocks} Under \emph{tick conversion}, a decision that takes $L$\,ms holds the previous action for $m(L)=\lceil L/\Delta\rceil-1$ ticks, a constant execution delay in the sense of \citet{derman2021}. Under \emph{asynchronous play}, the engine runs in ViZDoom's asynchronous player mode at 35\,Hz of wall-clock time and does not wait for the agent. A tick is \emph{skipped} when it passes without a new state being read. Skipped ticks are observed from the engine's episode clock; held ticks are modelled.

\paragraph{Metric and unit} The primary metric is hit accuracy, $\sum\mathrm{hits}/\sum\mathrm{shots}$, with shots counted from ammunition decreases and hits from the engine counter; before any learning we checked that it separates scripts known to differ (Section~\ref{app:ruler}). Seeds are not independent games here: 200 seeds produced 103 distinct teacher games, one of which accounted for 98 seeds (Section~\ref{app:seeds}). We group seeds by the signature of the tick-converted teacher's game and treat each distinct game as one unit. We report two intervals: a percentile bootstrap over distinct games (10,000 resamples, seed 0, paired across arms), which captures game-to-game variation, and a $t$ interval over runs ($n=5$, df $=4$), which captures run-to-run variation on the host.

\section{Method}
\label{sec:method}

\subsection{Latency-matched decomposition}

\paragraph{Three arms} Three arms play the same seeds asynchronously. \textbf{T0} is the teacher with no added latency. \textbf{S} is the same teacher, but before applying each decision it sends the request that the decider would receive to the decider's server and waits for, then discards, the answer. Each decision of S therefore carries the decider's transport and inference time, including its tail (the occasional much longer delays). \textbf{D} is the decider. With accuracy $a(\cdot)$ over distinct games,
\begin{equation}
a(\mathrm{T0}) - a(\mathrm{D}) = \underbrace{a(\mathrm{T0}) - a(\mathrm{S})}_{\text{time component}} + \underbrace{a(\mathrm{S}) - a(\mathrm{D})}_{\text{quality component}} .
\end{equation}
The step from T0 to S adds only the wait for the decider's server; the step from S to D replaces the teacher and its coordinate input with the decider and its sentence input (Figure~\ref{fig:design}).

\paragraph{When the decomposition is valid} The time component measures what waiting for the decider's server costs on this host, which includes transport and host jitter (variation in timing on the computer that runs the experiment) as well as inference. The decomposition is valid only if S and D experience similar delays. A tick is skipped when it passes without a new state being read, and an arm's skipped-tick share is the fraction of all ticks of the episode that were skipped. We check each pair with the ratio of skipped-tick shares $r = \mathrm{skip}(\mathrm{S})/\mathrm{skip}(\mathrm{D})$ against the band $0.8 \le r \le 1.25$ (the skip match), and flag rather than drop pairs outside it. S and D runs are interleaved on one host. In the two studies below, T0 comes from an earlier session on the same seeds (three runs, 0.1\% skipped ticks); the replication (Section~\ref{sec:replication}) measures T0 in the same session as the other arms.

\begin{figure}[!htbp]
\centering
\includegraphics[width=\linewidth]{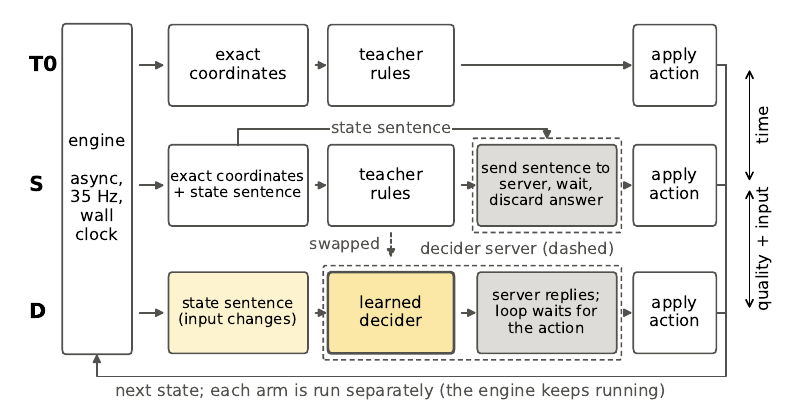}
\caption{The three arms of the latency-matched decomposition. S acts like T0 but calls the decider's server and waits for, then discards, the answer; D is the learned decider on that server. T0 to S adds the wait (gray: time component); S to D swaps in the decider and its input (highlighted: quality component). Box widths do not encode time.}
\label{fig:design}
\end{figure}

\subsection{On-policy asynchronous distillation}

\paragraph{Idea} If the quality component comes from states the decider reaches only because ticks pass without its decisions, it should shrink when those states enter training. We therefore use DAgger \citep{ross2011}: the decider plays, the teacher labels the states the decider reaches, and the decider is retrained on the teacher's demonstrations plus these labelled states. We run it with the learner in control ($\beta = 0$) inside the asynchronous loop:
\begin{enumerate}
\item Let the current decider $D_i$ play $n$ asynchronous games on collection seeds. At every decision, record the state sentence it saw; after its answer has been applied, record the teacher's action for the same observation. Labelling after the answer adds no time to the decision.
\item Train $D_{i+1}$ from scratch with the original recipe on the teacher's demonstrations plus all collected rounds, keeping each row in the same training, validation or calibration split.
\item Repeat for a number of rounds fixed before any evaluation, then evaluate the last decider once.
\end{enumerate}
We use $n=60$ games per round and two rounds, and fix the training seed in advance (seed 31, as for the original decider). A decider trained with seed 32 on the same data serves as a replicate.

\subsection{Protocol}
\label{sec:protocol}

\paragraph{Preregistration} Before its seeds were opened, every experiment was frozen in a preregistration with numeric verdict lines, and its text was hashed with SHA-256 (Section~\ref{app:prereg}). Figure~\ref{fig:ledger} lists every study with its host, scene, size and status; the full ledger is Table~\ref{tab:ledger}. Every analysis script was run on positive and negative controls before it saw the results. Seed ranges are disjoint across design, metric qualification, training, collection and evaluation. Doom evaluation runs are executed one at a time (Section~\ref{sec:shortcuts}). The conveyor study (Section~\ref{sec:factory}) ran four episodes concurrently, as preregistered; we report the consequences of that choice there.

\paragraph{Intervention verdicts} For the intervention we preregistered three verdicts. \emph{Failure}: the new decider N does not beat the original decider O, that is, the game-bootstrap lower bound of $a(\mathrm{N})-a(\mathrm{O})$ is at most zero. \emph{Strong success}: N beats O, the remaining quality component $a(\mathrm{S})-a(\mathrm{N})$ is at most 2\pp{}, and its upper bound lies below the lower bound of the time component. \emph{Partial success}: every other outcome.

How AI agents were used in the research, and the checks that kept their output from entering the results unchecked, are described in Section~\ref{app:from-ai}.
\section{Theoretical analysis}
\label{sec:theory}

\paragraph{Questions and scope} We examine when the decomposition separates waiting effects from policy effects, and how disagreement on states visited by the decider relates to the quality component. The disagreement result concerns additive return such as expected hits; its connection to hit accuracy is empirical, not part of the proposition. Proofs are in Section~\ref{app:proofs}.

\paragraph{Model} Let $s_t$ be the engine state at tick $t \in \{0,\dots,H-1\}$ and let $z_t \in \{0,1\}$ indicate whether the agent submits a new decision at tick $t$ ($z_t=0$ is a skipped tick, on which the previous action $a_{t-1}$ is repeated). The indicators are generated by a latent host state $h_t$ that evolves as a Markov chain; we call its law the \emph{skip process} $Z$. The augmented state $x_t = (s_t, a_{t-1}, z_t, h_t)$ is Markov. A policy $\pi$ maps $x_t$ to an action (in practice it ignores $h_t$), and is constrained to $\pi(x_t)=a_{t-1}$ whenever $z_t = 0$; the teacher reads $s_t$ exactly and the decider reads a sentence computed from $s_t$, so both are policies in this sense. Let $r(s_t,a_t)\in[0,1]$ be a per-tick reward (for example, a hit) and $J(\pi, Z) = \mathbb{E}\big[\sum_t r(s_t,a_t)\big]$ the expected return when $\pi$ plays under $Z$.

\begin{assumption}[Exogenous skipping]
\label{ass:exo}
The skip process does not depend on which action the policy chooses: the host state $h_{t+1}$, and hence $z_{t+1}$, is independent of $(a_0,\dots,a_t)$ given $h_t$ and the decision latency.
\end{assumption}

Under Assumption~\ref{ass:exo}, S and D share one skip process $Z_D$ when their latencies match, T0 plays under $Z_0$, and the three arms estimate $J(\pi_T, Z_0)$, $J(\pi_T, Z_D)$ and $J(\pi_D, Z_D)$.

\begin{proposition}[Identification and bias]
\label{prop:bias}
The time component $J(\pi_T,Z_0)-J(\pi_T,Z_D)$ changes only the skip process and the quality component $J(\pi_T,Z_D)-J(\pi_D,Z_D)$ changes only the policy. If instead S runs under $Z_S \neq Z_D$, the estimated quality component is $Q + [J(\pi_T,Z_S) - J(\pi_T,Z_D)]$, where $Q = J(\pi_T,Z_D)-J(\pi_D,Z_D)$ is the quality component under $Z_D$. If the teacher's value of a metric (return, or accuracy) is $\kappa$-Lipschitz in the expected share of skipped ticks $\sigma(Z)$ over the skip processes the host produces, the bias is at most $\kappa\,|\sigma(Z_S)-\sigma(Z_D)|$.
\end{proposition}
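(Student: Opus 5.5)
The plan is to prove the three claims of Proposition~\ref{prop:bias} in order, treating each arm as a pair (policy, skip process). Assumption~\ref{ass:exo} is what lets us do this.

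\textbf{Identification.} First I will show that under Assumption~\ref{ass:exo} the law of the skip indicators $(z_t)$ is fixed by the latency profile alone. The host chain $h_t$ evolves independently of the chosen actions given its past and the latency, so the joint law of the trajectory factorizes into three parts: the engine transition kernel, the policy (including the constraint $\pi(x_t)=a_{t-1}$ when $z_t=0$), and the kernel of the skip process.

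T0 and S run the same teacher rule. Their latencies differ only by the server call. So their trajectory laws differ only in the skip-process factor, which is $Z_0$ for T0 and $Z_D$ for S. Likewise S and D have matched latency and hence share the factor $Z_D$, so their laws differ only in the policy factor.

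This gives the first sentence of the proposition: each difference of $J$ varies exactly one argument. I will write this out as equality of the other factors in the product formula for the path measure.

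\textbf{Bias decomposition.} This step is pure algebra. The estimated quality component is $J(\pi_T,Z_S)-J(\pi_D,Z_D)$. Adding and subtracting $J(\pi_T,Z_D)$ gives
\begin{equation*}
J(\pi_T,Z_S)-J(\pi_D,Z_D) = Q + \bigl[J(\pi_T,Z_S)-J(\pi_T,Z_D)\bigr].
\end{equation*}
The only point to note is that D's arm is unaffected by the mismatch. Its factor is still $Z_D$ because the decider itself was run.

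\textbf{Lipschitz bound.} Define $g(Z)$ as the teacher's metric under $Z$. The metric is either $J(\pi_T,Z)$ or the ratio of expected hits to expected shots; in both cases it is a function of the skip process for a fixed teacher. The hypothesis says $g$ factors through $\sigma$ on the set of host-produced processes, so $|g(Z)-g(Z')|\le\kappa|\sigma(Z)-\sigma(Z')|$ for any two such processes. Applying this to $Z_S$ and $Z_D$, both of which the host produces, bounds the bracket by $\kappa\,|\sigma(Z_S)-\sigma(Z_D)|$.

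For accuracy, the same decomposition holds with $a(\cdot)$ in place of $J$. The add-and-subtract step never used additivity, so I will state it for a generic arm functional.

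\textbf{Main obstacle.} The delicate step is the identification claim. I must make precise that "latencies match" implies "same skip process" even though the decider's latency could depend on its input sentence and hence on the trajectory. My first approach is to read Assumption~\ref{ass:exo} as conditioning on the latency sequence. S issues the identical server request that D would receive, but along S's own trajectory, not D's. So I will define $Z_D$ as the skip kernel induced by the server's latency distribution, which is a property of the server and host, and assume that distribution does not depend on request content.

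I will flag this content-independence as the implicit modelling step. It is also exactly why the empirical skip-match check, and the Lipschitz bias bound for when it fails, are needed.
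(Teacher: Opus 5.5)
Your proposal is correct and follows the paper's proof. In both, the decomposition is an identity in the two-argument map $J(\pi,Z)$, the bias comes from adding and subtracting $J(\pi_T,Z_D)$, and the Lipschitz hypothesis bounds the bracket; Assumption~\ref{ass:exo} serves only to make $Z_D$ a well-defined skip process shared by S and D. Your path-measure factorization and explicit content-independence of latency spell out more carefully what the paper takes as the premise ``when their latencies match''. The paper also remarks, beyond what the statement requires, that the estimated time component carries the same bias with the opposite sign, so the two biases cancel in the total gap.
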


Section~\ref{app:from-theory} discusses how the manipulation check becomes a sensitivity estimate, why disagreements on skipped ticks cost nothing, and how the bound relates to DAgger and to hit accuracy.

\begin{proposition}[Quality component and on-policy disagreement]
\label{prop:pdl}
Let $d^{D}_t$ be the distribution of $x_t$ when the decider plays under $Z$. Let $\epsilon_D = \frac{1}{H}\sum_t \Pr_{x\sim d^{D}_t}\big[z_t=1,\ \pi_D(x)\neq\pi_T(x)\big]$ be its per-tick rate of disagreement with the teacher on its own states. Let $\Delta_Q = \max_{t,x,a}\big[Q^{T}_t(x,\pi_T(x)) - Q^T_t(x,a)\big]$, where $Q^T_t(x,a)$ is the teacher's action value (expected return for a specified action followed by the teacher) under $Z$. Then
\begin{equation}
J(\pi_T,Z)-J(\pi_D,Z) = \sum_{t} \mathbb{E}_{x\sim d^D_t}\big[Q^T_t(x,\pi_T(x)) - Q^T_t(x,\pi_D(x))\big] \le H\,\epsilon_D\,\Delta_Q ,
\end{equation}
and the difference is non-negative if $\pi_T$ is greedy (choosing an action with the highest action value) with respect to $Q^T$.
\end{proposition}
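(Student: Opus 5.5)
This is the performance difference lemma, adapted to the augmented Markov chain on $x_t=(s_t,a_{t-1},z_t,h_t)$. Under Assumption~\ref{ass:exo} the skip process $Z$ is the same whichever policy plays, so $\pi_T$ and $\pi_D$ share one transition kernel on $x$. The kernel factorizes into the engine step for $s_{t+1}$ given $(s_t,a_t)$ and the host chain for $(h_{t+1},z_{t+1})$, and only the action depends on the policy.

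I define the teacher's value and action value under $Z$ by backward recursion. Set $V^T_t(x)=Q^T_t(x,\pi_T(x))$ and
\[
Q^T_t(x,a)=r(s_t,a)+\mathbb{E}\big[V^T_{t+1}(x_{t+1})\mid x_t=x,a_t=a\big],
\]
with $V^T_H\equiv 0$. Then $J(\pi_T,Z)=\mathbb{E}[V^T_0(x_0)]$, and the initial distribution is the same for both policies.

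\textbf{Telescoping.} Along a trajectory of $\pi_D$, I write
\[
J(\pi_D,Z)-J(\pi_T,Z)=\mathbb{E}_{\pi_D}\Big[\sum_t \big(r(s_t,a_t)+V^T_{t+1}(x_{t+1})-V^T_t(x_t)\big)\Big].
\]
The $V^T$ terms collapse to $V^T_H(x_H)-V^T_0(x_0)=-V^T_0(x_0)$. Conditioning each summand on $x_t$, and using that $a_t=\pi_D(x_t)$ and the next state is drawn from the shared kernel, the summand becomes $Q^T_t(x_t,\pi_D(x_t))-V^T_t(x_t)$. Changing the sign gives the stated identity with expectation over $d^D_t$.

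Shared dynamics is exactly where exogeneity enters. Without it, $z_{t+1}$ would depend on the action through something other than $x_t$, and $Q^T$ would not be the correct conditional expectation under $\pi_D$.

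\textbf{Bound.} I split each expectation over $d^D_t$ into three events.
\begin{itemize}
\item On $z_t=0$, both policies are forced to play $a_{t-1}$, so the integrand is zero. This is why disagreements on skipped ticks cost nothing.
\item On $z_t=1$ with $\pi_D(x)=\pi_T(x)$, the integrand is again zero.
\item On the remaining event, $z_t=1$ and $\pi_D(x)\ne\pi_T(x)$, the integrand is at most $\Delta_Q$ by definition.
\end{itemize}
Summing over $t$ gives at most $\Delta_Q\sum_t\Pr_{d^D_t}[z_t=1,\pi_D\ne\pi_T]=H\epsilon_D\Delta_Q$. If $\pi_T$ is greedy with respect to $Q^T$, every integrand is non-negative, so the difference is non-negative.

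\textbf{Main obstacle.} The delicate step is making the model rigorous. The teacher reads only $s_t$ while $Q^T$ is defined on $x$, and the action-constraint on skipped ticks must be respected in the definition of $Q^T_t(x,a)$. I would restrict $a$ to admissible actions, so that $a=a_{t-1}$ when $z_t=0$, and take the maximum in $\Delta_Q$ over admissible $a$ as well. I would also note that if the host chain depends on the decision latency, that latency must be folded into $h_t$ or be policy-independent for the two policies to share $Z$. This is the content of the Assumption~\ref{ass:exo} clause.
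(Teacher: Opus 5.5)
Your proposal is correct and follows the paper's proof essentially step for step. It uses the same backward-defined $V^T_t$ and $Q^T_t$ with $V^T_H\equiv 0$, the same telescoping along a trajectory of $\pi_D$ (conditioned through the Markov property of the augmented state), and the same three-case split ($z_t=0$, agreement, disagreement) for both the bound and the greedy sign claim; your extra care about admissible actions in $\Delta_Q$ and about folding latency into $h_t$ is a harmless refinement, since the paper fixes $Z$ and maximises over all actions, which only loosens the constant because the $z_t=0$ terms vanish anyway.
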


\section{Results}
\label{sec:results}

Figure~\ref{fig:forest} collects the primary contrasts of all studies on one axis. In every asynchronous study of defend\_the\_center, pooled hit accuracy is highest for the zero-latency teacher (T0), lower for the latency-matched teacher (S, the same teacher waiting for the decider's server) and lowest for the deciders, although individual runs overlap (Figure~\ref{fig:sm_studies}). In both studies that include it, the decider retrained by DAgger (dataset aggregation: the decider plays, the teacher labels the states it reaches, and the decider is retrained on them) lies between S and the original decider.

\begin{figure}[!tb]
\centering
\includegraphics[width=\linewidth]{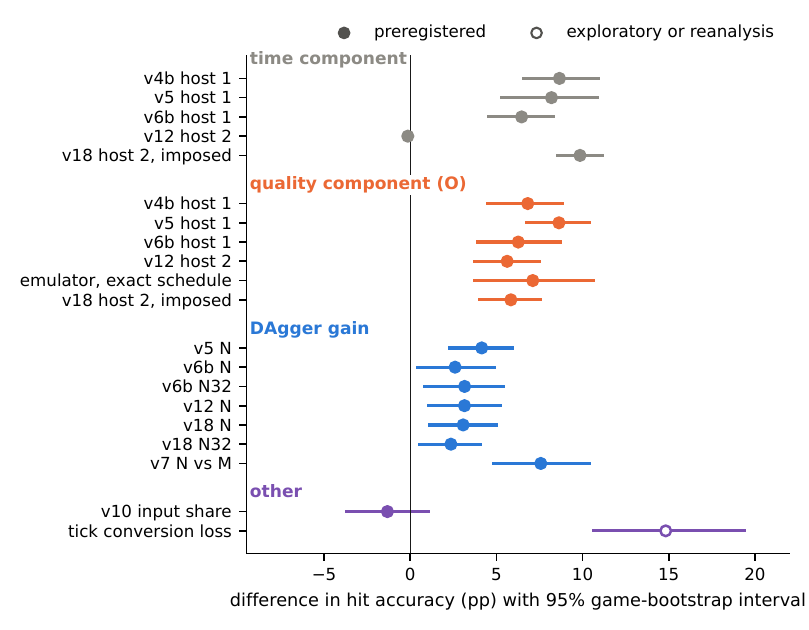}
\caption{Primary contrasts of all studies on one axis: time components, quality components of O, DAgger gains and two other contrasts, in hit-accuracy points with 95\% game-bootstrap intervals. Filled: preregistered; open: reanalysis (tick conversion over distinct games). Host 2: v12 and v18; all others host 1.}
\label{fig:forest}
\end{figure}

Four further metrics of the replication study (kills, reaction to a lone enemy, survival and hits taken) are shown in Figure~\ref{fig:sm_metrics} and discussed in Section~\ref{app:from-runvar}.

\begin{table}[!htbp]
\centering
\caption{Latency-matched decomposition of hit accuracy}
\label{tab:decomp}
\small
\begin{tabular}{@{}lccc@{}}
\toprule
Arm or contrast & Hit accuracy (\%) & \multicolumn{2}{c}{95\% interval} \\
 & or difference (pp) & game bootstrap & run-level $t$ \\ \midrule
T0 & 95.5\% & -- & -- \\
S & 86.8\% & -- & -- \\
O & 80.0\% & -- & -- \\
\addlinespace
Time, T0 $-$ S & 8.7\pp & [6.6, 10.9] & -- \\
Quality, S $-$ O & 6.8\pp & [4.5, 8.8] & [1.2, 12.4] \\
\bottomrule

\end{tabular}
\par\smallskip{\footnotesize\raggedright Five S/O pairs, 18 distinct games; T0 from an earlier session on the same seeds. Differences are computed from unrounded values. Game bootstrap: paired resampling of distinct games; run-level $t$: over the five pairs. Symbols: Table~\ref{tab:notation}.\par}
\end{table}

\subsection{Decision time and decision quality contribute comparably}
\label{sec:decomp}

Five interleaved S/O pairs on seeds 850000--850029 (30 seeds, 18 distinct games) give the decomposition in Table~\ref{tab:decomp}. The zero-latency teacher hits 95.5\% of its shots, the latency-matched teacher 86.8\% and the decider 80.0\%. The time component (T0 $-$ S) is therefore 8.7 percentage points (pp) and the quality component (S $-$ O) 6.8\pp. Counted in hits per game, an additive quantity, the same arms give a quality component larger than the time component (Figure~\ref{fig:hits}; exploratory).

The preregistered extension to five pairs allowed the reading ``quality component below 5\pp'' only if the game-bootstrap upper bound was at most 5\pp; it is 8.8\pp, so we do not use that reading.

The skip ratio fell outside the prespecified band in two of five pairs, with values of 1.59 and 0.62. The decider also varied more across runs than the latency-matched teacher (standard deviation 3.6 against 1.4\pp).

\subsection{Identical delays in live play}
\label{sec:live}

Proposition~1 requires S and the decider to face the same skip process; the studies above matched only the share of skipped ticks. In a further preregistered study (v18) we imposed one delay schedule on every arm in live asynchronous play. On host 2, which almost never skips ticks on its own, each arm waited after every decision until a preset number of ticks had passed: the observation gaps that the original decider O produced on host 1 in v9 (seeds 860000--860049). The teacher (Tw), O, N and a second DAgger decider trained with another seed (N32) received the same schedule on 100 new seeds (887000--887099, 100 distinct games); T0 played without waiting. The preregistered checks passed: the realised gap equalled the target gap in 99.9\% of decisions in every arm, and the waiting arms skipped 8.73--8.81\% of ticks (Section~\ref{app:live}).

\begin{figure}[!htbp]
\centering
\includegraphics[width=\linewidth]{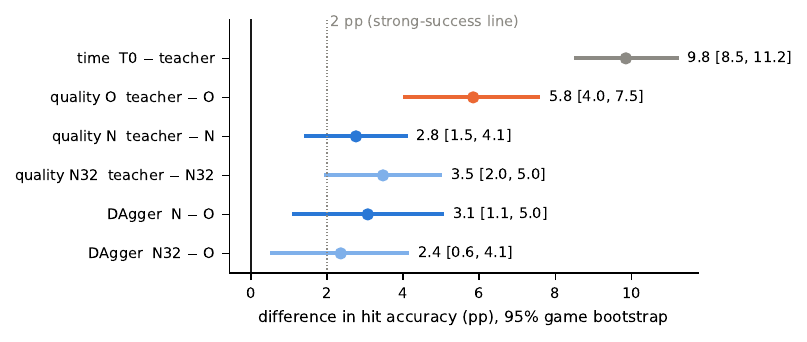}
\caption{Live play with one delay schedule imposed on every arm (v18; host 2; 100 distinct games, two sessions). Time: T0 $-$ teacher on the schedule; quality: teacher $-$ decider on the same schedule; DAgger: N or N32 $-$ O. Paired 95\% game-bootstrap intervals; dotted: 2\pp{} strong-success line.}
\label{fig:live}
\end{figure}

With delays identical by construction, the quality component of the original decider was 5.8\pp{} [4.0, 7.5] (Figure~\ref{fig:live}). The time component, T0 $-$ Tw, was 9.8\pp{} [8.5, 11.2]; on this host it is the cost of the imposed schedule, not of the host's own delays. DAgger improved on O for both training seeds, by 3.1\pp{} [1.1, 5.0] for N and 2.4\pp{} [0.6, 4.1] for N32. The quality component left for N was 2.8\pp{} [1.5, 4.1], above the 2\pp{} of the preregistered strong-success criterion, so the intervention stays a partial success. The estimate for O is slightly below the range of the share-matched studies on host 1 (6.3--8.6\pp) and within the interval of the synchronous emulator (7.1\pp{} [3.7, 10.6], Section~\ref{sec:controlled}); for N the emulator's estimate on the recorded host-1 schedules was lower ($-$0.2\pp{} [$-$2.5, 2.1], 29 games). These were measured on other games.

We ran the study in two sessions, restarting the servers and the runner in between. With the schedule imposed on this host, 498 of 500 seed and arm pairs produced the same episode in both sessions. The second session therefore shows that live play under an imposed schedule is reproducible; it does not add independent variation, and all intervals are over games.

\subsection{The quality component is consistent with covariate shift}
\label{sec:diag}

When ticks pass without a decision, the agent keeps its previous action and drifts into positions and timings the teacher never produced. We measured how often the decider chooses the teacher's action on two distributions of states (Table~\ref{tab:diag}). The decider agrees less often with the teacher on states encountered during its own play than on the teacher's states. On the teacher's own states the decider agrees 98.4\% of the time. On the decisions it took itself in 60 asynchronous collection games it agrees 88.8\% of the time, an error rate about seven times higher.

Between 44\% and 51\% of decisions with an enemy in view are taken after the ammunition has run out; there firing has no effect, and the teacher, which ignores ammunition, keeps labelling ``fire'', so we report agreement on decisions with ammunition left.

We read the gap as consistent with covariate shift: the decider encounters a different distribution of states during its own play than during teacher-generated training \citep{ross2011}. A direct count points the same way (exploratory): of the state sentences with an enemy in view and ammunition left that the decider met in the two DAgger rounds, 49.2\% and 47.8\% do not occur verbatim among the 40,440 distinct sentences of the teacher data it was trained on, against 8.0\% of the teacher's own validation sentences. Figure~\ref{fig:ood} shows how familiar these states are to the decider. Two competing explanations remain. One is data volume: the decider trained on the collected states also saw more data. The other is underfitting: the first retrained decider matched only 74.0\% of its own round-1 training rows when ammunition-empty rows are included. A data-volume control is reported in Section~\ref{sec:control}.

\begin{figure}[!htbp]
\centering
\includegraphics[width=\linewidth]{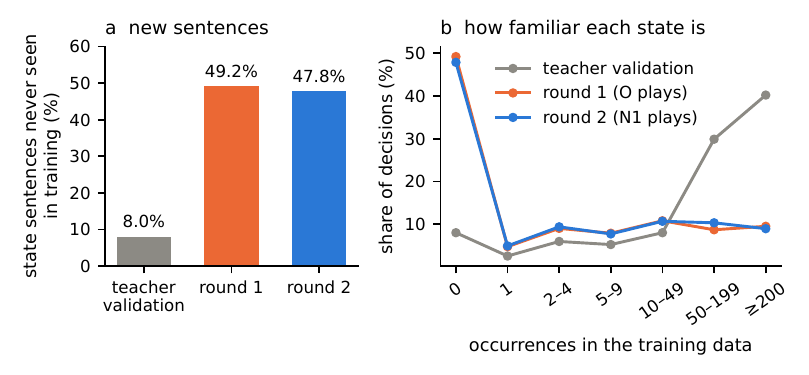}
\caption{A direct view of covariate shift (exploratory). (a) Share of state sentences, with an enemy in view and ammunition left, that never occur among the 40,440 distinct training sentences: 8.0\% of the teacher's validation sentences, 49.2\% and 47.8\% of those met in the two DAgger rounds. (b) How often each state occurs in the training data.}
\label{fig:ood}
\end{figure}

Most on-policy errors are turns in the wrong direction (Section~\ref{app:dagger}); Figure~\ref{fig:qual} shows six of them, one per game. Figure~\ref{fig:confusion} shows where the on-policy errors are: when the teacher turns right, the original decider turns left in about a fifth of the cases; strafing and firing decisions are almost always right.

Results from a second host qualify this mechanism: there almost no ticks pass without a decision and the quality component is of the same size (Section~\ref{sec:hosts}), so the drift is not produced by skipped ticks alone. The decider's own wrong turns may also carry it into states absent from the teacher's data, the setting in which DAgger was introduced \citep{ross2011}.

The live study also measured the mechanism directly. At every decision of a decider we recorded the action the teacher would have chosen in the same state. On its own states O chose differently from the teacher in 22.7\% of its decisions and N in 14.9\%, a difference of 7.8\pp{} [6.0, 9.6]; N32 differed in 13.8\%. Retraining on the decider's own states thus reduced disagreement on the states it visits, the quantity that bounds the quality component in Proposition~\ref{prop:pdl}, although disagreement remained. Across games the relation between a decider's disagreement and its quality loss was weak (Figure~\ref{fig:livemech}; correlation 0.24 for O, 0.19 for N and 0.06 for N32).

\begin{figure}[!htbp]
\centering
\includegraphics[width=0.62\linewidth]{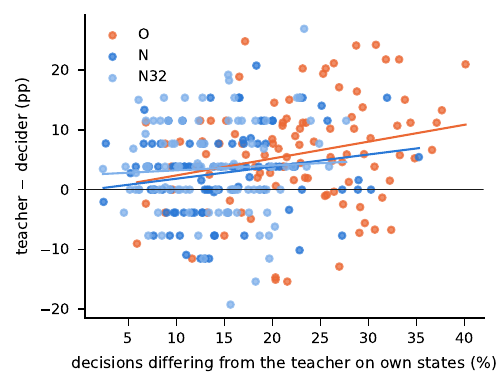}
\caption{Disagreement with the teacher on the decider's own states against its quality loss, per game (v18; 100 distinct games). Lines: least-squares fit per decider.}
\label{fig:livemech}
\end{figure}

\subsection{On-policy asynchronous distillation reduces the quality component}
\label{sec:dagger}

Retraining on decisions collected during the decider's own play reduced the quality component, with partial success under the preregistered criteria. Two rounds added 44,820 and 47,905 decisions collected in the loop. The final decider N was evaluated against the original decider O and the latency-matched teacher S in five interleaved triads on the same 18 distinct games (Table~\ref{tab:dagger}, Figure~\ref{fig:sm_studies}). On the preregistered game bootstrap, N improves on O by 4.1\pp{} (2.3--5.9), the quality component falls from 8.6\pp{} for O to 4.5\pp{} for N, and the time component is 8.2\pp. The remaining quality component exceeds 2\pp{} and its upper bound (6.3) lies above the lower bound of the time component (5.3), so the preregistered verdict is \emph{partial success}.

\begin{table}[!htbp]
\centering
\caption{Hit accuracy in the intervention study}
\label{tab:dagger}
\small
\begin{tabular}{@{}lccc@{}}
\toprule
Arm or contrast & Hit accuracy (\%) & \multicolumn{2}{c}{95\% interval} \\
 & or difference (pp) & game bootstrap & run-level $t$ \\ \midrule
T0 & 95.5\% & -- & -- \\
S & 87.3\% & -- & -- \\
N & 82.8\% & -- & -- \\
O & 78.7\% & -- & -- \\
\addlinespace
\textbf{Improvement, N $-$ O} & 4.1\pp & [2.3, 5.9] & [$-$0.4, 8.8] \\
Quality, S $-$ O & 8.6\pp & [6.8, 10.4] & [1.4, 16.0] \\
Quality, S $-$ N & 4.5\pp & [2.6, 6.3] & [1.4, 7.6] \\
Time, T0 $-$ S & 8.2\pp & [5.3, 10.9] & -- \\
\bottomrule

\end{tabular}
\par\smallskip{\footnotesize\raggedright Five interleaved triads (S, N, O), 18 distinct games; T0 from an earlier session. The last column is the preregistered run-level $t$ interval over the five triads. Symbols: Table~\ref{tab:notation}.\par}
\end{table}

The run-level interval of the improvement includes zero; run-level variation between triads is reported in Section~\ref{app:from-runvar}.

\subsection{Data-volume control}
\label{sec:control}

The improvement of N could come from the states it was trained on or simply from having seen more data. We therefore trained a control decider M using the same training procedure as N. M received the teacher's demonstrations plus 92,725 additional rows, exactly as many as the two DAgger rounds added. These additional rows came from new teacher-played games (off-policy). We evaluated M and N alternately on the replication seeds (preregistration v7).

Hit accuracy is 84.8\% for N (84.2\% in the replication session) and 77.2\% for M (Figure~\ref{fig:volume}), a difference of 7.6\pp. By the preregistered rule the improvement is \emph{not explained by data volume}. M also scored below the original decider measured earlier on the same seeds (81.6\%, Table~\ref{tab:replication}), although the two were not run in the same session: more teacher demonstrations did not help, and may have hurt, while the same number of on-policy rows helped. More training data alone therefore does not explain the gain; whether the first decider was also undertrained is not excluded (Section~\ref{sec:limits}).

\subsection{Replication on 56 new distinct games}
\label{sec:replication}

A second preregistered evaluation on 100 new seeds (860000--860099, 56 distinct games) addressed three objections to the first two studies: the small number of games, the earlier session of T0, and the single training seed. All five arms ran in the same session, one run at a time, in a rotated order over four blocks of 25 seeds. N32 was trained with training seed 32 on exactly the same data as N and was fixed before evaluation.

The replication again met the preregistered criteria for partial success (Table~\ref{tab:replication}). The contemporaneous T0 (94.3\%) is 1.2\pp{} below the earlier-session T0 of the first two studies, so their time components may be slightly overstated. Both DAgger deciders improve on O: by 2.6\pp{} for N (0.4--4.9; in the four blocks 5.0, 1.7, 1.7 and 2.4\pp) and 3.2\pp{} for N32 (0.8--5.4), leaving quality components of 3.7\pp{} and 3.1\pp. By the preregistered rules both are again a \emph{partial success}: the improvement's game-bootstrap interval excludes zero, and the remaining quality component is below the time component in point estimate, but its interval overlaps that of the time component and it is not below 2\pp.

\begin{table}[!htbp]
\centering
\caption{Replication with a second training seed}
\label{tab:replication}
\small
\begin{tabular}{@{}lccc@{}}
\toprule
Arm or contrast & Hit accuracy (\%) & Game-bootstrap & Skipped \\
 & or difference (pp) & 95\% interval & ticks (\%) \\ \midrule
T0 & 94.3\% & -- & 0.1\% \\
S & 87.9\% & -- & 8.5\% \\
N (seed 31) & 84.2\% & -- & 7.8\% \\
N32 (seed 32) & 84.7\% & -- & 8.2\% \\
O & 81.6\% & -- & 7.3\% \\
\addlinespace
Time, T0 $-$ S & 6.5\pp & [4.6, 8.3] & \\
Quality, S $-$ O & 6.3\pp & [3.9, 8.7] & \\
Quality, S $-$ N & 3.7\pp & [0.8, 6.6] & \\
\textbf{Improvement, N $-$ O} & 2.6\pp & [0.4, 4.9] & \\
Quality, S $-$ N32 & 3.1\pp & [0.8, 5.5] & \\
\textbf{Improvement, N32 $-$ O} & 3.2\pp & [0.8, 5.4] & \\
\bottomrule

\end{tabular}
\par\smallskip{\footnotesize\raggedright 56 distinct games with a contemporaneous zero-latency teacher and a second training seed (preregistration v6b); paired game-bootstrap 95\% intervals. Differences are computed from unrounded values. Skipped ticks: share of engine ticks that passed without a new decision. Symbols: Table~\ref{tab:notation}.\par}
\end{table}

\subsection{A second host, a second scene and the decision period}
\label{sec:hosts}

Four further preregistered studies ask which parts of the decomposition belong to the host, which to the scene and which to the decider, and how often the decider should decide (Table~\ref{tab:hosts}). Host 2 is a Linux workstation (Section~\ref{app:env}); the procedures, deciders, seeds and bootstrap are those of the earlier studies, and arms ran one at a time.

\begin{table}[!htbp]
\centering
\caption{Host, scene and decision-period contrasts}
\label{tab:hosts}
\small
\begin{tabular}{lcc}
\toprule
Study and contrast & Estimate (pp or points) & 95\% interval \\ \midrule
\multicolumn{3}{l}{\textit{v12, host 2, defend\_the\_center (hit accuracy, pp)}} \\
time T0 $-$ S & $-$0.1 & [$-$0.4, 0.0] \\
quality S $-$ O & 5.6 & [3.7, 7.5] \\
DAgger N $-$ O & 3.1 & [1.1, 5.2] \\
\addlinespace
\multicolumn{3}{l}{\textit{v11, host 2, deadly\_corridor (mean reward, points)}} \\
time T0 $-$ S & 0.0 & [0.0, 0.0] \\
quality S $-$ O & $-$4.6 & [$-$10.2, $-$0.1] \\
DAgger N $-$ O & $-$1.8 & [$-$7.5, 3.5] \\
\addlinespace
\multicolumn{3}{l}{\textit{v11, host 1, deadly\_corridor (mean reward, points)}} \\
time T0 $-$ S & $-$12.1 & [$-$25.7, 0.6] \\
quality S $-$ O & 2.5 & [$-$17.0, 21.0] \\
DAgger N $-$ O & 4.2 & [$-$15.7, 23.9] \\
\addlinespace
\multicolumn{3}{l}{\textit{v10, host 1, defend\_the\_center (hit accuracy, pp)}} \\
input S $-$ SS & $-$1.3 & [$-$3.7, 1.1] \\
policy SS $-$ N & 3.2 & [0.7, 5.7] \\
\addlinespace
\multicolumn{3}{l}{\textit{v15, host 2, defend\_the\_center (hit accuracy, pp)}} \\
hold 2 ticks, H2 $-$ H0 & $-$58.9 & [$-$67.6, $-$51.4] \\
fresh $-$ stale, H8 $-$ L8 & $-$7.8 & [$-$17.4, 1.2] \\
\addlinespace
\multicolumn{3}{l}{\textit{v15, host 2, deadly\_corridor (mean reward, points)}} \\
hold 1 tick, H1 $-$ H0 & $-$21.9 & [$-$35.6, $-$8.6] \\
hold 4 ticks, H4 $-$ H0 & 46.4 & [28.8, 64.2] \\
fresh $-$ stale, H8 $-$ L8 & 35.1 & [19.3, 50.7] \\
\bottomrule
\end{tabular}

\par\smallskip{\footnotesize\raggedright Preregistrations v10, v11, v12 and v15; paired game-bootstrap 95\% intervals over distinct games. Hit accuracy in pp for defend\_the\_center; mean reward for deadly\_corridor. Hold contrasts are written as H$k$ $-$ H0. Differences are computed from unrounded values. Symbols: Table~\ref{tab:notation}.\par}
\end{table}

\paragraph{Host (v12)} On host 2 we repeated the replication of Section~\ref{sec:replication} on the same 100 seeds (56 distinct games) with T0, S, N and O. The arms skipped 0.08--0.09\% of ticks instead of 7.3--8.5\%. The time component is $-$0.1\pp{} [$-$0.4, 0.0], below the preregistered 2\pp{} threshold, while the quality component of O is 5.6\pp{} [3.7, 7.5] (6.3\pp{} on host 1) and N improves on O by 3.1\pp{} [1.1, 5.2], again a partial success by the rule of Section~\ref{sec:dagger}. The decider's per-episode median decision time was also steadier on host 2, 1.8--2.3\,ms over 860 episodes, than on host 1, 2.4--21.2\,ms over 1,300 episodes (exploratory, from the run logs), so the time component measured on host 1 includes that host's scheduling. Figure~\ref{fig:latency_hosts} shows the two distributions.

\begin{figure}[!htbp]
\centering
\includegraphics[width=\linewidth]{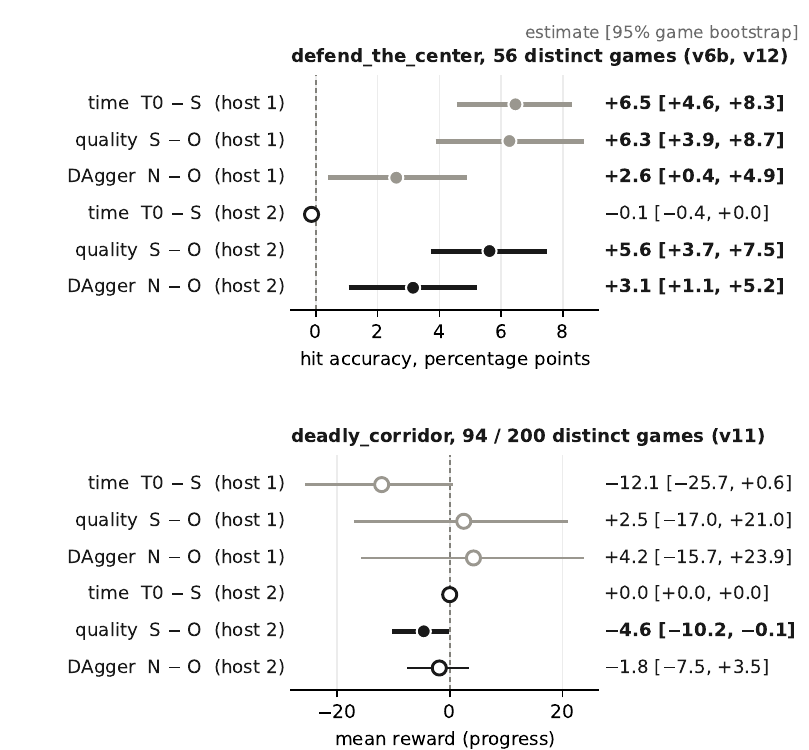}
\caption{Time (T0 $-$ S), quality (S $-$ O) and DAgger (N $-$ O) contrasts with 95\% game-bootstrap intervals; filled: interval excludes zero; gray: host 1, black: host 2. Top: defend\_the\_center, hit accuracy, 56 distinct games per host (v6b, v12). Bottom: deadly\_corridor, mean reward, 94 games on host 1 and 200 on host 2 (v11).}
\label{fig:hosts}
\end{figure}

\paragraph{Scene (v11)} deadly\_corridor rewards progress towards an armour at the end of a corridor; its teacher runs forward, sidesteps near enemies and never shoots, a script we know to be suboptimal. With 200 seeds (200 distinct games) on host 2, skipping stayed near 1.5\% and T0 and S played identical games, so the time component is exactly zero; the decider scored 4.6 points more than the teacher ([0.1, 10.2]) and DAgger did not help (N $-$ O $=$ $-$1.8 [$-$7.5, 3.5]), which is expected when the teacher is the weaker player. Whether waiting costs or pays is a property of the scene; how much waiting occurs is a property of the host (Figure~\ref{fig:hosts}). On host 1 the same 200 seeds gave only 94 distinct games: 107 of them produced one and the same game, in tick conversion and in asynchronous play alike, whereas all 200 differed on host 2. The mapping from seed to game thus depends on the host's installation; distinct games are identified within each host, so the two hosts' corridor results are not measured on the same games.

For hit accuracy, replacing exact coordinates with a sentence cost no measurable accuracy (v10, Section~\ref{app:from-v10}).

How often to decide differs by scene (v15, Section~\ref{app:from-v15}): deciding every tick is best in defend\_the\_center, holding each action for four ticks in deadly\_corridor.

An exploratory selective-automation analysis is in Section~\ref{app:from-cascade}.

\subsection{A second environment with a deadline}
\label{sec:factory}

In both Doom scenes a late decision leaves the previous action running, so waiting changes the score by degrees. On a production line a late decision can be worth nothing: a part reaches the diverter at a fixed time and, without a decision, a default applies. We therefore ran the decomposition, under a new preregistration, in a conveyor-inspection simulator in which each part reaches the diverter 400\,ms after the camera and passes by default (Section~\ref{app:factory}). A decider of the same family as the Doom decider reads a sentence describing each part and answers over HTTP; an exact teacher decides from the true state. We added a fixed delay after each answer and crossed it with the policy.

\begin{figure}[!htbp]
\centering
\includegraphics[width=\linewidth]{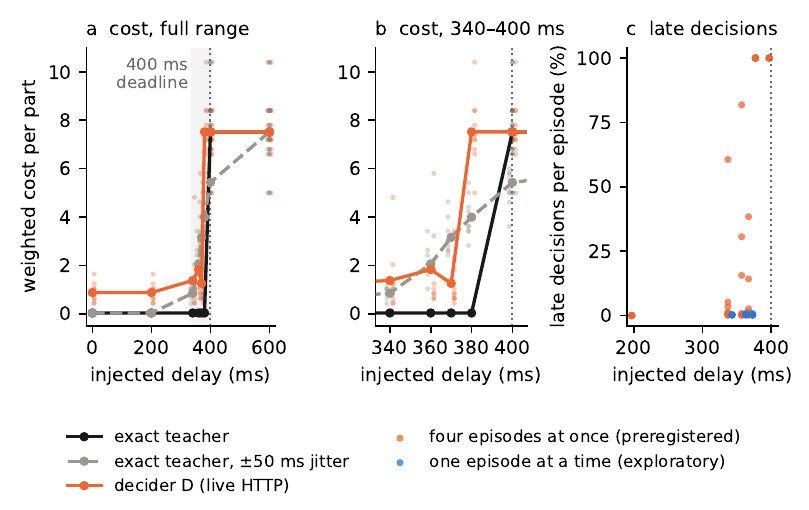}
\caption{A deadline makes the time component a cliff (conveyor simulator, line A, asynchronous). (a) Cost per part by injected delay: exact teacher, teacher with $\pm$50\,ms jitter, decider D over HTTP; dots: episodes (10 per cell, 500 parts); dotted: 400\,ms deadline. (b) 340--400\,ms. (c) Late decisions: four episodes at once (preregistered), one at a time (exploratory).}
\label{fig:factory}
\end{figure}

Below the deadline the exact teacher paid nothing for delay: up to 380\,ms its cost stayed at 0.03 per part. At zero delay the decider cost 0.84 more per part than the teacher [0.64, 1.08] and let 11.2\% of defective parts through; this is the quality component. The decider itself began to pay for time before the nominal deadline. As preregistered, its live runs used four concurrent episodes on host 2, and under this load its round trips had a heavy tail (p99 66.7\,ms against a median of 3.7\,ms). On line A it was late on 6.0--13.1\% of parts at 340--370\,ms and on every part at 380\,ms, where its round trip and rounding to the next 20\,ms tick push answers past 400\,ms. In an exploratory rerun with one episode at a time, its mean late shares at 340--370\,ms fell to 0.2--0.4\% (0.2--0.8\% in single episodes) and its p99 to 4.1--4.3\,ms. The early time cost thus came mainly from running episodes concurrently on the host, the failure of concurrent evaluation reported in Section~\ref{sec:shortcuts}, seen here in a second environment (Section~\ref{app:from-factory}).

Past the deadline only time mattered: from 400\,ms every decision of both policies was late, both passed every part, and their costs were equal (7.52 per part). The interaction at 600\,ms, $-$0.84 [$-$1.07, $-$0.65], cancelled the quality component within the preregistered equivalence margin, so a better decider would not have helped. Timing noise also moves the edge for the teacher: with $\pm$50\,ms jitter it was already late on 30.4\% of parts at 360\,ms and cost more than the concurrent decider (2.06 against 1.83 per part; Figure~\ref{fig:factory}).

\subsection{Two shortcuts that hide the gap}
\label{sec:shortcuts}

\paragraph{Tick conversion} Most of the decider's calls finish within one tick (p50 2.14\,ms, p90 15.94\,ms), so tick conversion, which counts only whole ticks of delay, models almost no delay for it and predicts almost no loss. This is a prediction of the rule, not of asynchronous play. Under tick conversion it hit 99.2\% of its shots; on the same 30 raw seeds in asynchronous play it hit 84.1\%. Over the 18 distinct games of these seeds, the unit used everywhere else, the figures are 98.6\% and 83.8\%, a loss of 14.8\pp{} [10.6, 19.4]. Figure~\ref{fig:tick} shows the loss game by game. We had preregistered that a difference of 10\pp{} or more would retire tick conversion as a basis for real-time claims; on the preregistered unit, raw seeds, the difference was 15.1\pp.

Under tick conversion the teacher's skill falls in steps at tick boundaries (Section~\ref{app:from-dose}).

Concurrent evaluation on one host failed its preregistered calibration, so asynchronous evaluations ran one at a time (Section~\ref{app:from-concurrent}).

\section{Limitations}
\label{sec:limits}

\paragraph{Scope} The results come from two scenes of one game and one conveyor simulator, scripted teachers and one decider family, on two hosts. The simulator renders its own state into sentences; its results say nothing about camera-based inspection accuracy. A distant decider is represented only by the same decider acting on decisions 8 ticks old (v15), not by a separate model. Other games, human teachers and physical systems remain untested. The conveyor study ran four episodes concurrently, a setting that Section~\ref{sec:shortcuts} shows can change timing, so the decider's time cost below the deadline reflects that load.

\paragraph{Host variation} Skipped-tick shares varied between runs of the same arm. In earlier diagnostic asynchronous runs (Section~\ref{app:llm}), the zero-latency teacher skipped 6.3\% to 15.3\% of ticks, whereas it skipped about 0.1\% in the main studies. In two of the five v5 triads, the skipped-tick shares of the two deciders ranged from 11.1\% to 25.3\%. We have not identified the cause of this variation. Variation in skipped-tick shares between runs is the main reason the run-level intervals are wide. Sessions also differ: the same decider on the same 18 games hit 80.0\% of its shots in the decomposition study (Table~\ref{tab:decomp}) and 83.8\% in the tick-conversion comparison (Section~\ref{sec:shortcuts}), a session-to-session difference comparable to the DAgger gains.

\paragraph{Exogenous skipping} Proposition~1 requires the latency-matched teacher (S) and the decider (D) to face the same skip process. The studies on host 1 check this through the ratio of skipped-tick shares, not through identical schedules. The identical-schedule emulator (Section~\ref{sec:controlled}) put the bias term at $-$2.7\pp{} [$-$6.2, 0.5] for the original decider, outside our $\pm$2\pp{} tolerance. Its fidelity was confirmed on fresh seeds only where the wait after the last observation was short; on a heavily loaded host, both earlier fidelity checks failed because of long waits. The live study with one imposed schedule (Section~\ref{sec:live}) ran on host 2 only, which almost never skips ticks on its own; whether a schedule can be imposed as exactly on a host with its own timing noise is untested, and so is how sensitive the share-matched design is to differences in the length of skipped runs. There the DAgger decider kept a quality component of 2.8\pp{} [1.5, 4.1], whereas the emulator had put it at $-$0.2\pp{} [$-$2.5, 2.1] on 29 other games.

\paragraph{Unit count} The main studies use 18 distinct games, so their intervals are wide; the replication, the data-volume control, the input study, the host-2 study and the emulator's synthetic schedules use 56.

\paragraph{Other caveats} The underfitting explanation of Section~\ref{sec:diag} is not excluded: the first retrained decider matched only 74.0\% of its own round-1 training rows. The first two studies used a zero-latency teacher from an earlier session, 1.2\pp{} above the contemporaneous one, so their time components may be slightly overstated. The data-volume control compares its decider with O across sessions. The link from additive return, for which the bound of Section~\ref{sec:theory} holds, to hit accuracy is empirical; counted in hits per game, an additive quantity, the time and quality components are 2.5 and 3.3 hits in the decomposition study and 1.9 and 3.0 in the replication (exploratory). Each preregistration fixed its own primary hypotheses, and we did not correct for multiple comparisons across studies.

\paragraph{What the time component means} The time component measures the cost of waiting for the decider's server on the host where it is measured, including transport and host jitter. Host jitter is variation in timing on the computer running the experiment. It is not a lower bound for all policies, and a policy that anticipates its own delay could beat S.

\paragraph{Partial success} On-policy distillation reduced the quality component but did not remove it, and its effect over O is not significant at run level. We fixed two rounds in advance and did not tune them.

\section{Implications}
\label{sec:implications}

\paragraph{For evaluation} Most benchmarks for learned agents pause the environment while the agent decides. In our studies such a pause removes one of the two costs that a real-time agent pays, and the other cost, lower decision quality on the agent's own states, was of comparable size on one host (Section~\ref{sec:decomp}). A benchmark that reports only paused or tick-converted scores could therefore rank a slow, careful agent above a fast one that would do better on the real clock. We suggest that real-time claims report the time and quality components separately, on the host and schedule on which the agent is meant to run.

\paragraph{For system design} Large language models are now a common choice for agent decisions. In our comparators, language models that chose every action were much slower per decision than the distilled decider and played far worse (Section~\ref{app:llm}). The small distilled decider answered within a tick; its remaining loss is consistent with covariate shift, and on-policy distillation reduced it only in part. Where a late decision is replaced by a default, as on the conveyor, a better decider would not have helped past the deadline (Section~\ref{sec:factory}). This points to a division of labour that our results do not test directly: a large model can teach, check or handle rare cases away from the loop, while a fast learned decider acts inside it.

\paragraph{Scope} These are implications of two simulated environments and one decider family, not tested claims about other systems (Section~\ref{sec:limits}).

\section{Conclusion}
\label{sec:conclusion}

Section~\ref{sec:implications} set out what these results suggest for evaluation and system design; here we summarise them. In our setting a real-time gap can be split into a time component, the cost of waiting for decisions, and a quality component, the residual difference associated with the decider and its input representation; the two interact once skipping is large. For a fast distilled decider in asynchronous ViZDoom play, the two were comparable on one host; on another the time component vanished and the quality component did not.

When one delay schedule was imposed on every arm in live play, the quality component of the original decider remained (5.8\pp{} [4.0, 7.5] on 100 new games), and retraining reduced both the quality component and the decider's disagreement with the teacher on its own states. In the identical-schedule emulator the quality component persisted with no skipping, and matching only the share of skipped ticks may have biased the split. The quality component is consistent with covariate shift: the decider meets a different distribution of states in its own play than in training, although we have not excluded underfitting. Training on states produced by that play reduced the quality component on new games (two training seeds on each host), but only to a partial success under our preregistered criteria, and it did not help in a scene where the teacher is the weaker player. On the first host the reduction was greater than that obtained with the same amount of teacher-played data.

The measured cost of waiting differed between hosts and scenes. Where a late decision is replaced by a default, as on the conveyor, the time component behaved like a step: below the deadline, delay did not raise the exact teacher's cost, although host timing noise moved where the edge fell; past it both policies defaulted and the quality difference vanished. In our stale-decision emulation, acting on decisions 8 ticks old cost reward in one scene and was inconclusive in the other, and the best decision period differed between scenes.

A latency-matched arm, the teacher that waits for the decider before acting, indicates which of the two remedies to try, a faster decider or retraining on its own states, provided latency matching is adequate; equal skipped-tick shares alone do not establish this. The variation we saw between runs suggests that real-time evaluations should report run-level as well as game-level uncertainty. What carries over to other systems is the procedure, a latency-matched arm, a skip check and distinct games as the unit; the numbers are those of this game, these deciders and these hosts.

\section*{CRediT authorship contribution statement}
\textbf{Chihoon Shin}: Conceptualization, Methodology, Investigation, Formal analysis, Data curation, Visualization, Writing -- original draft, Writing -- review \& editing, Project administration. \textbf{Junyeong Lee}: Software, Resources, Validation. \textbf{Kihyeok Jeong}: Validation, Supervision, Writing -- review \& editing. \textbf{Wonok Kwon}: Methodology, Validation, Writing -- review \& editing.

\section*{Funding}
This research did not receive any specific grant from funding agencies in the public, commercial, or not-for-profit sectors.

\section*{Declaration of competing interest}
Chihoon Shin and Junyeong Lee are affiliated with the Myeongseongsimjae AX Institute, where the distilled decider evaluated in this study was developed. The other authors declare that they have no known competing financial interests or personal relationships that could have appeared to influence the work reported in this paper.

\section*{Data availability}
Preregistrations with their SHA-256 hashes, analysis scripts, decider weights and per-episode logs will be made publicly available upon publication; they are available to reviewers on request.

\section*{Declaration of generative AI and AI-assisted technologies in the manuscript preparation process}
During the preparation of this work the authors used Claude Opus 5 and, from 23 September 2026, Claude Opus 5.5 (Anthropic; Claude Code) and gpt-6-sol and gpt-6-astra (OpenAI; Codex command-line tool 0.156) in order to organise related literature, structure the manuscript, draft and edit text, and prepare figures from logged data. Their use in the research itself is summarised in Section~\ref{sec:protocol} and described in Section~\ref{app:from-ai}. No AI tool created or altered images of experimental observations: all figures are plotted by scripts from logged results or rendered by the game engine or the simulator's viewer. After using these tools, the authors reviewed and edited the content as needed and take full responsibility for the content of the publication.

\bibliographystyle{elsarticle-harv}
\bibliography{refs}

\clearpage
\appendix
\setcounter{figure}{0}\setcounter{table}{0}\setcounter{equation}{0}
\renewcommand{\thesection}{S\arabic{section}}
\renewcommand{\thefigure}{S\arabic{figure}}
\renewcommand{\thetable}{S\arabic{table}}
\renewcommand{\theequation}{S\arabic{equation}}
\section*{Supplementary material}

\begin{center}{\Large\bfseries Supplementary Material}\\[4pt]Separating Decision Time from Decision Quality in the Real-Time Gap of Distilled Deciders: Evidence from a Game and a Conveyor Simulator\end{center}

\section{Material moved from the main text}
\label{app:moved}

This section holds analyses and discussion that the main text summarises in a sentence.

\subsection{Further related work}
\label{app:from-related}

Agents that must choose the next action while the previous one is still executing face concurrent control \citep{xiao2020thinking}. Recent work formalises observational delay caused by the network's own inference \citep{anokhin2025handling}, random observation delays that arrive out of sequence \citep{karamzade2025modelbased}, the violation of the Markov assumption when an executed action depends on an earlier state \citep{zhu2026learning}, and the distribution shift created when the observation behind an action is stale by the time it executes \citep{dong2026reinforcement}; the remedies are state augmentation, history-augmented observations, architectural skip connections and fast action edits on top of a slow policy. Our study instead measures what a real host adds and separates its effect from that of decision quality.

\paragraph{Language-model game agents} Surveys single out low-latency control as a requirement of some genres \citep{hu2024}; benchmarks render games as text \citep{li2025,park2025}; and real-time reasoning environments change while the model is still thinking \citep{wen2025}. ViZDoom states have been represented in natural language before \citep{schwartz2019language}. Inference latency is a recognised obstacle for language-model agents in real-time games \citep{liu2023llmpowered,shao2024swarmbrain,wynter2024will}, and dual-process designs combine fast rules with asynchronous reasoning \citep{zhang2025leveraging}. We report language-model controllers only as peripheral comparators (Section~\ref{app:llm}).

\paragraph{Latency in deployed vision-language-action systems} Policies trained under idealised synchronous supervision can degrade severely under realistic inference latency, which has motivated injecting inference delays during imitation and reinforcement learning \citep{huang2026ticvla}. Such designs reduce the cost of latency without estimating how much of a measured loss it explains.

\subsection{Use of AI agents in the research}
\label{app:from-ai}

\paragraph{Use of AI agents in the research} Large language model agents (Claude Opus 5 until 23 September 2026 and Claude Opus 5.5 from that day, through the Claude Code environment; gpt-6-sol and gpt-6-astra through the Codex command-line tool 0.156) were used as research assistants: they proposed and critiqued experiment designs, implemented the experiment and analysis scripts, launched and monitored runs, and drafted analyses. To keep their output from entering the results unchecked, every experiment was frozen as a preregistration with a SHA-256 hash before its seeds were opened; every analysis script was run on positive and negative controls before it saw the results; every number reported in this paper is produced by a script from logged episode files and checked by an automated gate against those files; and the authors reviewed each design and result. Two errors introduced during the project (an agreement statistic that initially included decisions taken without ammunition, and an agent-produced draft that silently shortened the manuscript) were caught by these checks and corrected; they are reported in Supplementary Section~\ref{app:pilot}.

\paragraph{Use of AI agents} Large language model agents served as research assistants: they proposed and critiqued experiment designs, implemented the experiment and analysis scripts, launched and monitored runs, and drafted analyses. Their output entered the results only through fixed checks: every experiment was preregistered and hashed before its seeds were opened, every analysis script was run on positive and negative controls before it saw the results, and every reported number is produced by a script from logged episode files and checked by an automated gate against those files. The agents and versions used, and the errors these checks caught, are described in Section~\ref{app:from-ai}.

\subsection{Notes on the analysis}
\label{app:from-theory}

\paragraph{Interpretation of Proposition~\ref{prop:bias}} The Lipschitz condition requires changes in the teacher's metric to be no greater than a constant times changes in expected skipped-tick share. Proposition~\ref{prop:bias} turns the manipulation check into a bias term. The Lipschitz constant is not known; as a sensitivity estimate we plug in the slope fitted on all 450 episodes of the intervention study ($\hat\kappa \approx 0.61$ accuracy per unit skipped-tick share, common to all arms; Section~\ref{app:from-runvar}). In triad 1, the skipped-tick shares differ by 8.3\pp{}. Using the fitted slope, this difference corresponds to an estimated accuracy bias of about 5\pp{}. This is an estimate, not a certified bound, and matched mean skipped-tick shares do not guarantee matched skip processes; we therefore report skip-adjusted estimates beside, not instead of, the preregistered ones.

\paragraph{Two consequences} First, disagreements on skipped ticks cost nothing, because both policies must repeat $a_{t-1}$; only disagreements at decision ticks enter $\epsilon_D$. Second, the relevant error is measured on the decider's own asynchronous states, $d^D$, not on the teacher's; a decider can agree with the teacher on 98.4\% of teacher states and still carry an $\epsilon_D$ several times larger, which is what Table~\ref{tab:diag} shows.

\begin{table}[H]
\centering
\caption{Decider agreement with the teacher's actions}
\label{tab:diag}
\small
\begin{tabular}{@{}>{\raggedright\arraybackslash}p{0.46\linewidth}lrr@{}}
\toprule
State source & Decider & Agreement & Error \\
 & version & (\%) & rate (\%) \\ \midrule
Teacher demonstrations (validation split, 2,500 sampled) & original & 98.4 & 1.6 \\
On-policy, round 1 (24,547 decisions, 60 games) & original & 88.8 & 11.2 \\
On-policy, round 2 (25,529 decisions, 60 games) & after one round & 90.0 & 10.0 \\
\bottomrule

\end{tabular}
\par\smallskip{\footnotesize\raggedright Decisions with an enemy in view and ammunition left: 24,547 and 25,529 of the 44,820 and 47,905 decisions collected in the two rounds. On-policy states were collected in asynchronous play on seeds disjoint from evaluation.\par}
\end{table}

\paragraph{Ammunition filtering} Disagreements after the ammunition has run out have $Q^T_t(x,\pi_T(x))-Q^T_t(x,a)\approx 0$ for the firing part of the action, which is why we exclude them from the agreement statistic.

\paragraph{DAgger and its bound} This is the performance-difference argument behind DAgger \citep{ross2011}, applied to the augmented process with the skip process folded into the dynamics. DAgger trains on states drawn from $d^D$, so it targets the disagreement term of the bound in Proposition~\ref{prop:pdl}. It minimises a surrogate loss on aggregated data, which does not guarantee lower disagreement for the final policy on its own states, nor a lower actual gap; the bound also concerns an additive return, not accuracy. Whether the gap falls is the empirical question of Section~\ref{sec:dagger}.

\paragraph{Additive return versus accuracy} The proposition concerns an additive return such as expected hits. Our primary metric is accuracy, the ratio of hits to shots. The two move together here because the arms fire almost the same number of shots (Section~\ref{app:from-trims}). This link is empirical and is not part of the proposition.

\subsection{Further metrics, run-level variation and a skip-adjusted estimate}
\label{app:from-runvar}

\begin{figure}[!tb]
\centering
\includegraphics[width=\linewidth]{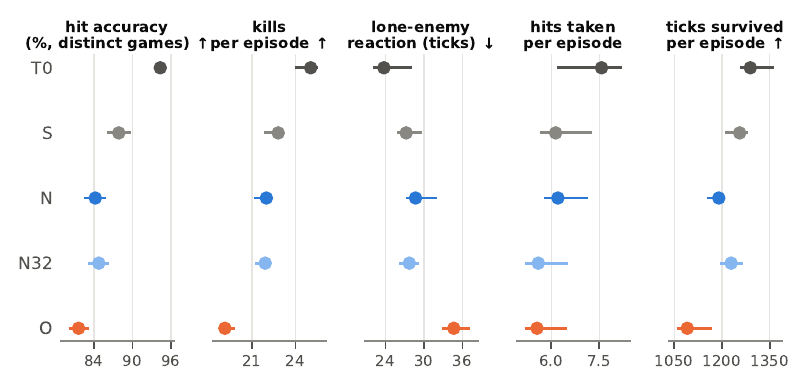}
\caption{Five metrics in the replication study (v6b; 100 seeds, 56 distinct games). Dots: means; bars: 95\% intervals from resampling distinct games. Lone-enemy reaction: ticks from the first tick with one enemy in view to the first shot aimed within 6 degrees of it. Hits taken depends on survival time and is not ranked.}
\label{fig:sm_metrics}
\end{figure}

Four further metrics in the replication study provide context for the accuracy result (Figure~\ref{fig:sm_metrics}). Kills per episode and reaction to a lone enemy follow the same broad teacher-versus-decider pattern: the original decider kills fewest enemies and reacts slowest. The zero-latency teacher survives about as long as S. Hits taken does not follow the ordering, because arms that survive longer are exposed to enemies for longer. The accuracy result is therefore not an artefact of firing fewer shots.

\begin{figure}[!tb]
\centering
\includegraphics[width=\linewidth]{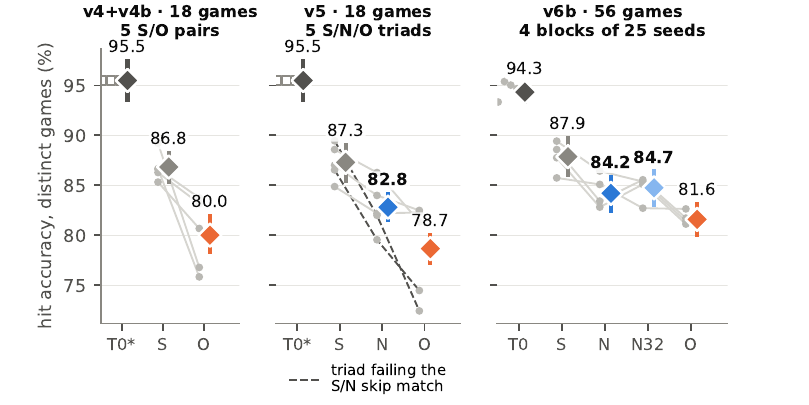}
\caption{Hit accuracy by run in the three studies on host 1: runs (v4+v4b: 5 S/O pairs; v5: 5 S/N/O triads; 18 games each) or 25-seed blocks (v6b: 4 blocks, 56 games). Squares: the three earlier-session T0 runs; filled diamonds: pooled, with 95\% game-bootstrap intervals; dashed: the two v5 triads outside the skip band 0.8--1.25.}
\label{fig:sm_studies}
\end{figure}

\paragraph{Host variation} The run-level interval of the improvement includes zero. In triads 1 and 2, both deciders skipped far more ticks than S (Figure~\ref{fig:sm_studies}, middle panel): N 15.7\% and 11.1\%, O 19.0\% and 25.3\%, S 7.4\% and 6.8\%. O skipped more than N, whichever order the arms ran in. These triads failed the S/N check (ratios 0.47 and 0.61) and also the S/O check (0.39 and 0.27); in triads 3--5 all three arms skipped 6.7--9.9\% of ticks. Excess skipping plausibly lowers a decider's accuracy, so these triads likely inflate both the quality component and the improvement over O. In triads 3--5 the improvement was 4.9, 1.5 and $-$0.1\pp{} and the remaining quality component 2.3, 3.0 and 2.7\pp; this split was not preregistered.

Figure~\ref{fig:sm_runs} shows how accuracy relates to skipping across runs in the three studies. The fitted slopes are negative for most arms, while the original decider in the replication study (v6b) has a small positive slope whose interval includes zero. Only three of the nine intervals exclude zero (N and O in the intervention study, v5, and N32 in v6b), and the negative slope of O in v5 is driven by the two triads that failed the skip match. With four or five runs per slope, and with skipping that may itself depend on the policy, these slopes are descriptive only.

\begin{figure}[H]
\centering
\includegraphics[width=\linewidth]{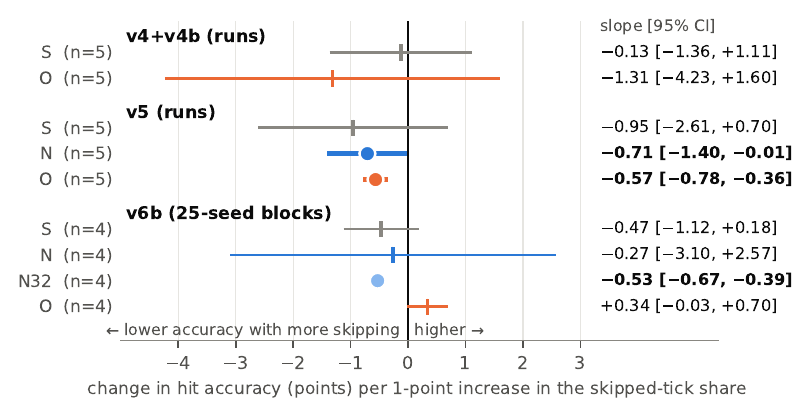}
\caption{Slope of hit accuracy on skipped-tick share, run by run, with 95\% $t$-intervals; n: runs (v4+v4b, v5: 18 games) or 25-seed blocks (v6b: 56 games). Filled circles: intervals that exclude zero. T0 is omitted because it skips almost no ticks. Scatter: Figure~\ref{fig:runs_scatter}.}
\label{fig:sm_runs}
\end{figure}

\begin{figure}[H]
\centering
\includegraphics[width=\linewidth]{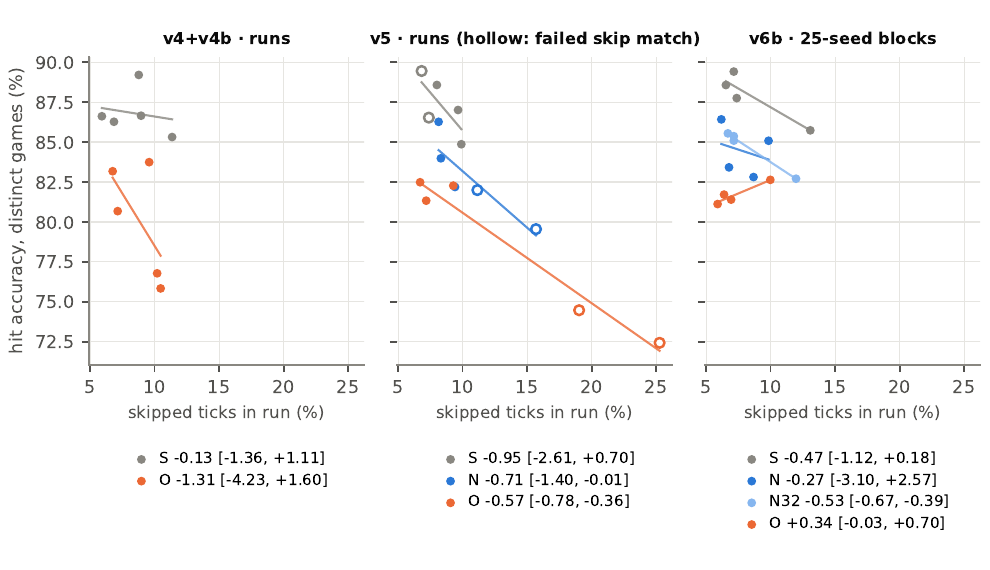}
\caption{Run-level data behind Figure~\ref{fig:sm_runs}: hit accuracy over distinct games against the skipped-tick share of each run (v4+v4b, v5) or 25-seed block (v6b), with least-squares lines per arm and the slope with its 95\% t-interval in each legend entry; hollow v5 markers are the triads that failed the S/N skip match. T0 is omitted because it skips almost no ticks.}
\label{fig:runs_scatter}
\end{figure}

\paragraph{Skip-adjusted estimate (exploratory)} Because skipping varies within as well as between runs, we regressed each episode's hit accuracy on arm indicators and the episode's skipped-tick share (450 episodes, weighted by shots, standard errors clustered by the 18 distinct games). The regression estimates an accuracy decrease of 6.1\pp{} for each additional 10\% of skipped ticks (standard error 0.4\pp). At equal skipping, N exceeds O by 3.9\pp{} (standard error 0.8) and falls 2.1\pp{} below S (standard error 1.0). This analysis was not preregistered, and a policy can influence its own skipping, so we treat it as supporting evidence only.

\subsection{Input and policy shares (v10)}
\label{app:from-v10}

\paragraph{Input and policy (v10)} The quality component compares a teacher reading exact coordinates with a decider reading a sentence. On host 1 we added a teacher SS that decides from the same sentence the decider reads (it agrees with the coordinate teacher on every decision outside the lowest ammunition bin). The input share, S $-$ SS, is $-$1.3\pp{} [$-$3.7, 1.1], below the preregistered 2\pp{} threshold; the policy share, SS $-$ N, is 3.2\pp{} [0.7, 5.7]. For accuracy we found no measurable input loss, and the policy share accounts for most of the quality component. The sentence does cost kills: SS does not fire with 1--4 bullets left and kills 2.96 fewer enemies per game [2.26, 3.62].

\subsection{Decision period and stale decisions (v15)}
\label{app:from-v15}

\paragraph{Decision period and stale decisions (v15)} A decider inside the game can choose how often to decide; a decider far from the game decides only as often as its round trip allows, and on a state that is already old. On host 2 we held each of O's actions for $k=0$, 1, 2, 4 or 8 ticks after deciding (arms H0 to H8), and added an arm L8 that acts every tick on the decision O made 8 ticks (228.6\,ms) earlier. The best period differs by scene (Figure~\ref{fig:holddose}). In defend\_the\_center (29 distinct games) deciding every tick is best: holding each action for two ticks lowers hit accuracy from 87.1\% to 28.3\% (H2 $-$ H0 $=$ $-$58.9\pp{} [$-$67.6, $-$51.4]). In deadly\_corridor (200 distinct games) holding for four ticks raises the mean reward from 58.8 to 105.2 (H4 $-$ H0 $=$ 46.4 [28.8, 64.2]), while holding for one tick lowers it (H1 $-$ H0 $=$ $-$21.9 [$-$35.6, $-$8.6]), against our preregistered prediction of a gain. A fresh decision held for 8 ticks (H8, a new decision every 8 ticks) against a decision 8 ticks old applied every tick (L8): H8 scores 35.1 points more than the stale decision in deadly\_corridor [19.3, 50.7]; in defend\_the\_center the difference is inconclusive ($-$7.8\pp{} [$-$17.4, 1.2]). A distant decider pays the stale-decision cost by construction; a fast local one can choose the period that suits the scene.

\begin{figure}[!tbp]
\centering
\includegraphics[width=\linewidth]{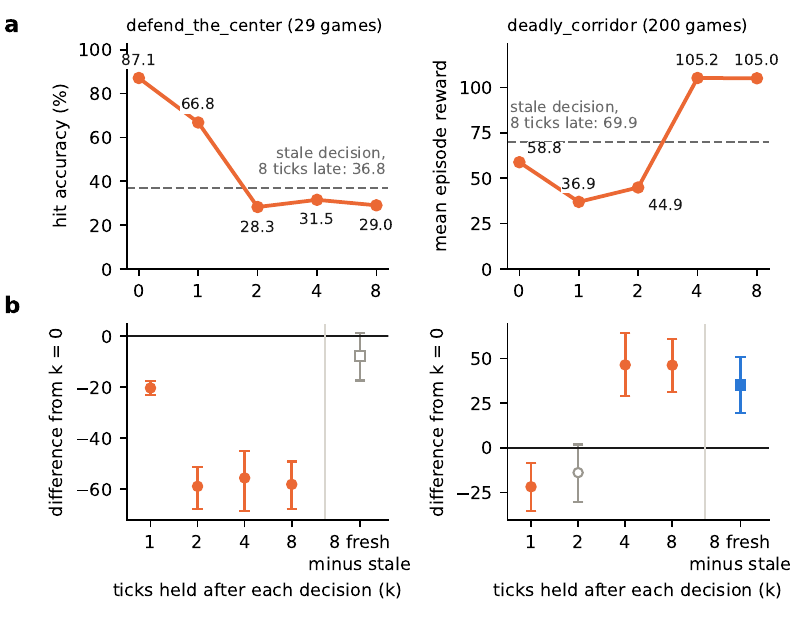}
\caption{How often to decide depends on the scene: O on host 2 holds each action for $k$ ticks (v15). (a) Hit accuracy (defend\_the\_center) and mean reward (deadly\_corridor) by $k$; dashed: O acting every tick on a decision 8 ticks (228.6\,ms) old. (b) Differences from $k=0$ and fresh versus stale (square), with 95\% game-bootstrap intervals.}
\label{fig:holddose}
\end{figure}

\subsection{Concurrent evaluation on one host}
\label{app:from-concurrent}

\paragraph{Concurrent evaluation on one host} To shorten wall-clock time, we preregistered a calibration of concurrent evaluation with each lane pinned to its own processor cores (core pinning). Three lanes, each with its own decider server, ran the latency-matched teacher at the same time. We would use the setup only if every lane stayed within 86.2--89.2\% raw accuracy and 5.5--12.1\% skipped ticks. These bands came from five solo runs. All three lanes failed: they skipped 82.5--83.2\% of ticks and hit 53.2--58.3\% of their shots. We did not separate whether pinning, concurrency or both caused this. On this host, asynchronous evaluations had to run one at a time.

\subsection{Further figures of the main studies}
\label{app:from-figs-main}

\begin{figure}[!tbp]
\centering
\includegraphics[width=\linewidth]{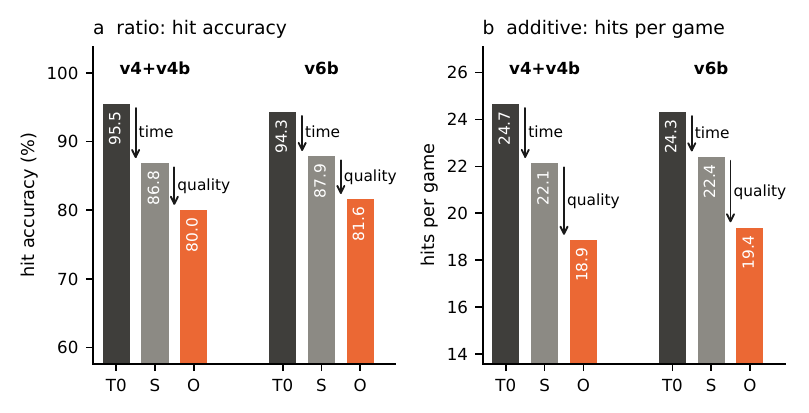}
\caption{The decomposition on two scales (exploratory for b). (a) Hit accuracy of T0, S and O in the decomposition study (v4+v4b; 95.5, 86.8, 80.0\%) and the replication (v6b; 94.3, 87.9, 81.6\%). (b) Hits per game in the same studies (24.7, 22.1, 18.9 and 24.3, 22.4, 19.4). Arrows mark the time and quality components.}
\label{fig:hits}
\end{figure}

\begin{figure}[!tbp]
\centering
\includegraphics[width=0.85\linewidth]{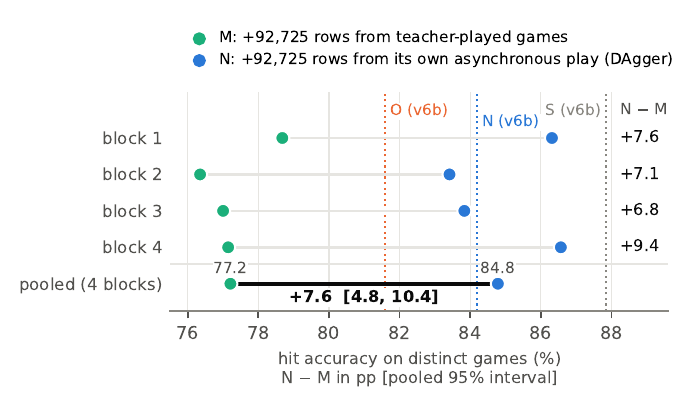}
\caption{The data-volume control (v7; 56 distinct games, 4 blocks): N adds 92,725 rows from its own asynchronous play, labelled by the teacher; M adds 92,725 rows from new teacher-played games. Gray segments: blocks; brackets: paired 95\% game-bootstrap interval of the pooled difference. Dotted: O, N and S from v6b; T0 lies off the axis at 94.3.}
\label{fig:volume}
\end{figure}

\begin{figure}[!tbp]
\centering
\includegraphics[width=\linewidth]{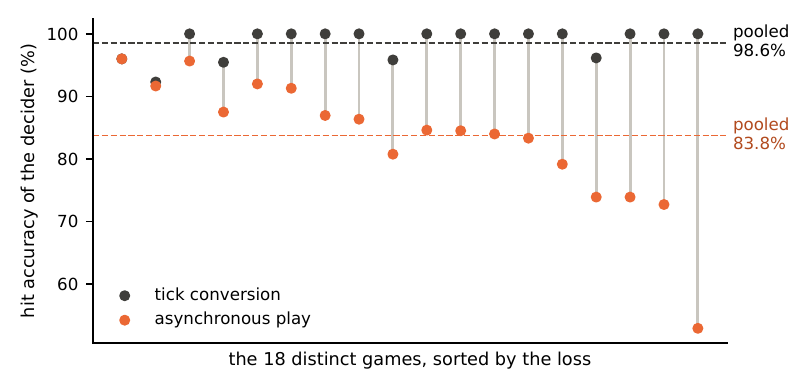}
\caption{Tick conversion against asynchronous play for the same decider on the 18 distinct games of seeds 850000--850029, sorted by the loss. Pooled hit accuracy: 98.6\% under tick conversion and 83.8\% in asynchronous play.}
\label{fig:tick}
\end{figure}

\subsection{Further diagnostics of the quality component}
\label{app:from-diag-more}

\begin{figure}[!tbp]
\centering
\includegraphics[width=\linewidth]{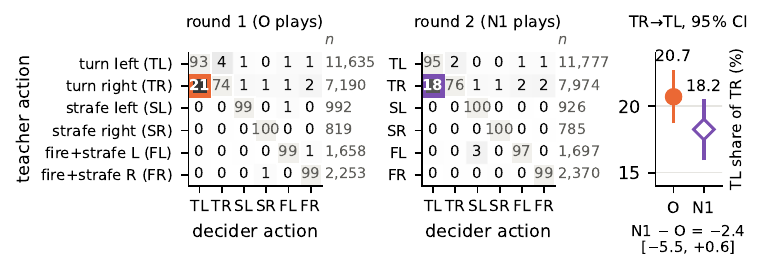}
\caption{On-policy errors in the DAgger rounds of v5 (enemy in view, ammunition left), driven by O and N1; row percentages (sums 99--100, errors on one 0--25\% scale). Right: TR$\to$TL rate, O 20.7\% [18.7, 22.7], N1 18.2\% [16.0, 20.5], difference $-$2.4 [$-$5.5, 0.6] (descriptive). No F row: the teacher never chose it.}
\label{fig:confusion}
\end{figure}

\subsection{Decision time on the two hosts}
\label{app:from-hosts-more}

\begin{figure}[H]
\centering
\includegraphics[width=\linewidth]{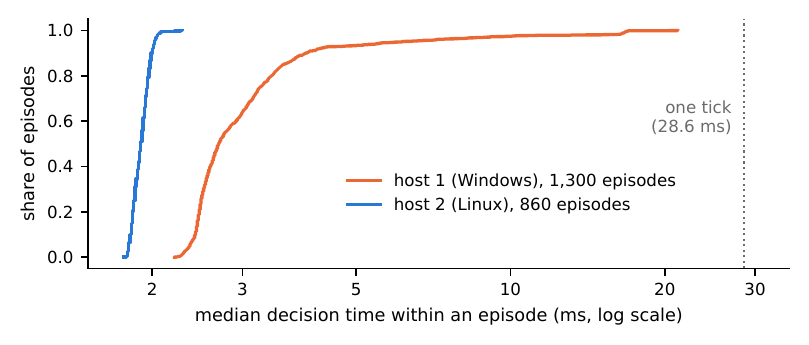}
\caption{Decision time on the two hosts (exploratory): cumulative share of episodes by the median decision time within the episode, 1,300 decider episodes on host 1 and 860 on host 2 (log scale). The dotted line marks one engine tick (28.6\,ms).}
\label{fig:latency_hosts}
\end{figure}

\subsection{Study ledger figure}
\label{app:from-protocol-more}

\begin{figure}[!tbp]
\centering
\includegraphics[width=\linewidth]{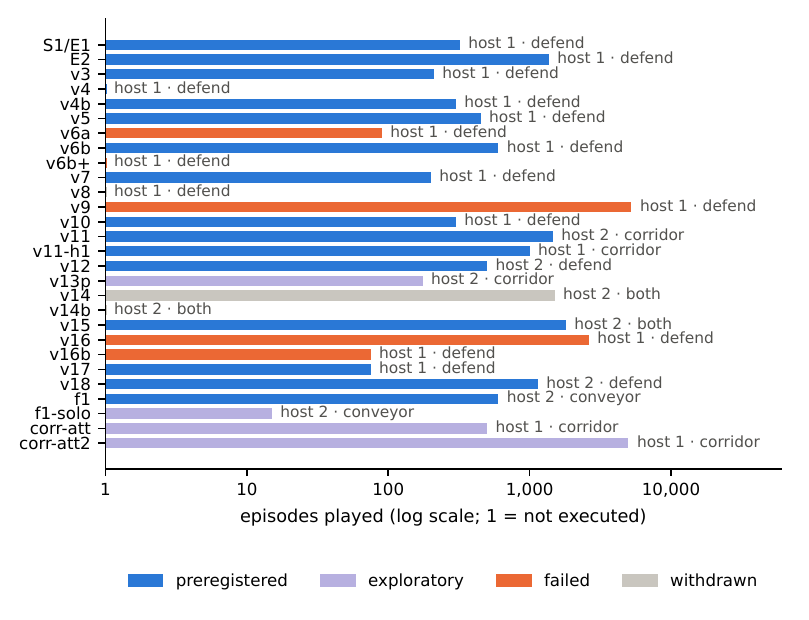}
\caption{All 28 studies of this paper (25,550 episodes in total): episodes played, host and scene. Blue: preregistered and run; lilac: exploratory or without a frozen hash; orange: failed its own calibration or fidelity check; gray: withdrawn. Bars at 1: not executed (v8, v14b) or counted in another row (v4 in v4b, v6b+ in v6b).}
\label{fig:ledger}
\end{figure}

\subsection{The two environments}
\label{app:from-setting-more}

\subsection{Identical schedules: a controlled-timing emulator}
\label{sec:controlled}

The live study of Section~\ref{sec:live} imposed one schedule in live play on host 2. To impose the recorded host-1 schedules exactly, and synthetic ones as well, we also rebuilt asynchronous play in the synchronous engine, where one schedule can be imposed on every policy. How the timing was rebuilt is described in Section~\ref{app:from-emulator}. With each policy deciding again on its own recorded schedule, hit accuracy matched in all 150 fresh episodes of two preregistered checks and all five game counts in 148, but the death tick differed in some episodes and both preregistered fidelity gates failed narrowly. The recorder had not logged the wait after the last observation, which the emulator then filled with the recorded gaps; in the second check, whose episodes skipped 19.4--39.4\% of ticks, every failing episode had such a wait of at least 5 ticks. A third preregistered check on fresh seeds (v17) logged that wait and imposed it: all five game counts matched and every emulated episode ended within its recorded window in 75 of 75 episodes. Its episodes skipped 8.1--9.8\% of ticks, as the v9 schedules used below did (7.9--11.3\%), and no wait exceeded 2 ticks, so fidelity is confirmed at the main studies' level of skipping but not for long waits. The results below come from the emulation rerun with that schedule (Section~\ref{app:controlled}).

On synthetic schedules imposed identically on T, O and N (Figure~\ref{fig:controlled}), a quality component remained without any skipping: O hit 10.0\pp{} less than the teacher [8.0, 12.2] and N 7.1\pp{} less [5.4, 9.0]. The time component rose with the skipped share, from 2.7\pp{} [1.0, 4.4] at 4.0\% of ticks skipped to 9.2\pp{} [7.2, 11.1] at 19.9\%, while N's quality component shrank from 7.1 to 2.1\pp: time and quality interact. DAgger improved on O on every schedule, by 2.5 to 4.5\pp.

\begin{figure}[H]
\centering
\includegraphics[width=0.92\linewidth]{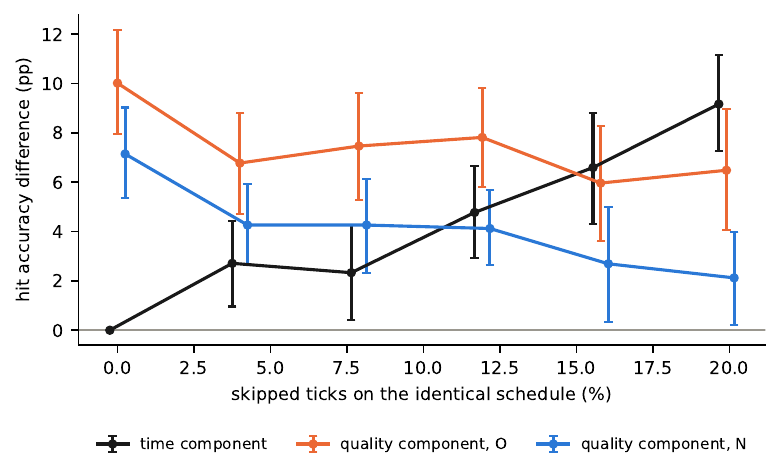}
\caption{Time and quality components on identical synthetic skip schedules imposed on T, O and N (56 distinct games, 100 seeds). Time: T without skipping minus T on the schedule; quality: T minus O and T minus N on the same schedule. Paired 95\% game-bootstrap intervals.}
\label{fig:controlled}
\end{figure}

On the schedules recorded on host 1 in v9 (Table~\ref{tab:controlled}), the teacher scored 2.7\pp{} less on the schedule S had met than on the original decider's own schedule [$-$6.2, 0.5], with 9.6\% and 8.6\% of ticks skipped. This bias term is not within our preregistered $\pm$2\pp{} tolerance, so matching shares did not rule it out. With schedules matched exactly, O's quality component was 7.1\pp{} [3.7, 10.6], against 4.4\pp{} [0.2, 8.9] under the share-matched design: the original decomposition may have understated the quality component and overstated the time component.

\begin{table}[H]
\centering
\caption{Recorded host-1 schedules replayed identically}
\label{tab:controlled}
\small
\begin{tabular}{@{}>{\raggedright\arraybackslash}p{0.46\linewidth}ll@{}}
\toprule
Contrast (host-1 recorded schedules, 29 games) & Original decider O & DAgger decider N \\
\midrule
Teacher on S schedule $-$ teacher on own schedule (bias term) & $-$2.7 [$-$6.2, 0.5] & $-$2.2 [$-$6.2, 1.6] \\
Quality, schedules matched exactly & 7.1 [3.7, 10.6] & $-$0.2 [$-$2.5, 2.1] \\
Quality, shares matched (original design) & 4.4 [0.2, 8.9] & $-$2.5 [$-$6.7, 1.8] \\
DAgger gain N $-$ O on N's schedule & \multicolumn{2}{l}{4.8 [1.8, 7.6]} \\
\bottomrule
\end{tabular}

\par\smallskip{\footnotesize\raggedright Schedules recorded on host 1 in v9 (seeds 860000--860049, 29 distinct games). Hit-accuracy differences in pp with paired game-bootstrap 95\% intervals. S: the teacher waiting for the decider's server; its schedule is the one the original design compared against.\par}
\end{table}

For the DAgger decider the picture differs. With schedules matched exactly, N's quality component on the recorded host-1 schedules was $-$0.2\pp{} [$-$2.5, 2.1], indistinguishable from zero, against 3.7\pp{} in the asynchronous replication on 56 games (Section~\ref{sec:replication}). Taken alone, this would move the intervention towards the preregistered strong-success criterion. We do not read it that way. First, it rests on the 29 distinct games of these seeds, and its interval still allows a residual of about 2\pp. Second, on the synthetic schedules over 56 games N kept a quality component at every skip rate, from 7.1\pp{} [5.4, 9.0] without skipping to 2.1\pp{} at 19.9\% (Figure~\ref{fig:controlled}). The recorded asynchronous episodes of the same 29 games point to the games rather than to the emulator: in them N's quality component was also indistinguishable from zero ($-$2.4\pp{} [$-$6.7, 1.8]; exploratory), while O's was the 4.4\pp{} above. The preregistered verdict therefore stays partial success. The two emulations agree on direction: DAgger improved on O on the recorded schedules (4.8\pp{} [1.8, 7.6]) and on every synthetic schedule. The live study of Section~\ref{sec:live}, run on other games and on host 2, put N's quality component under one imposed schedule at 2.8\pp{} [1.5, 4.1]; whether the emulator's lower value on these 29 games reflects the games or the emulation is open (Section~\ref{sec:limits}).

\subsection{Details shortened in the main text}
\label{app:from-trims}

Details that support single sentences of the main text, grouped by the section they belong to.

\paragraph{Introduction} A game agent that decides well but slowly and one that decides quickly but less well can look identical on a benchmark that pauses while they think. Fast decision services now make it cheap to put a learned decider inside a live loop \citep{golchinfar2026,park2025}.

\paragraph{Related work} Cloud-edge designs remove blocking synchronisation so that the controller always consumes the latest available feature \citep{peng2026latencytolerant}, and asynchronous deployment of world action models overlaps computation with execution and makes the alignment of commands with observation timestamps an engineering requirement \citep{team2026world}. Other approaches combine language-model planning with model-predictive control \citep{chen2026}, routing between fast and slow planners \citep{saha2024}, and speculative execution with a distilled corrector \citep{lin2025}. Distillation moves knowledge into cheaper models \citep{hinton2015,yu2024,kang2025}, and a 1.3M-parameter model trained on human demonstrations has been reported to outplay language models in defend\_the\_center \citep{golchinfar2026}. A time-conversion protocol that maps inference time to elapsed simulation time has been proposed for embodied agents \citep{zheng2025reflex}; as \citet{li2026think5hz} note, driving agents are commonly invoked on alternate simulation ticks with the previous command replayed in between; and the speed-accuracy trade-off of agents has been studied in real-time fighting games with local inference \citep{kang2025winfast}. Performance is better reported as a distribution than as a point estimate \citep{clary2019letsplay}, comparisons need statistical tests with sample sizes set from the expected effect \citep{colas2019a}, reliability has its own metrics of variability and worst-case risk \citep{chan2019measuring}, and offline and online evaluation create different pressures \citep{patterson2023empirical}.

\paragraph{Results} Our first preregistration (three pairs) asked whether the decider fell more than 5\pp{} below S in all three pairs; it did not, and the verdict was ``mixed''. Its three pairs are the first three of the five-pair study (v4b). The three-pair estimate had suggested that about three quarters of the gap was time; the two added pairs reversed that reading, and we report both. In the intervention study, N does not trade accuracy for fewer shots: it fires 25.6 shots per game against 25.7 for S and 23.5 for O, and kills 21.5 enemies per game against 22.5 for S and 18.4 for O. In the data-volume control, the difference between N and M has a game-bootstrap interval of 4.8--10.4\pp{} and an interval over the four blocks of 5.9--9.6\pp; the difference lies between 6.8 and 9.4\pp{} in every block. In the replication, the time component, now measured against a contemporaneous zero-latency teacher, is 6.5\pp{} (4.6--8.3). On host 2 the decider's accuracy rose (O 88.6\% against 81.6\%), but its distance from a teacher that waits as long as it does hardly changed. In deadly\_corridor on host 1 (94 distinct games) S skipped 10.0\% of ticks against 1.4\% for T0 and scored 12.1 points \emph{more} than T0 ([$-$0.6, 25.7]): in this scene a held forward step keeps earning progress, so waiting helps. In the controlled-timing emulator, on N's own recorded schedule, DAgger improved on O by 4.8\pp{} [1.8, 7.6]. In the conveyor simulator, cost per part weights an escaped defect 100, a false rejection 20, a line stop 10, an escalation 5 and a rework 1 (Figure~\ref{fig:factory}). When the belt waited for each decision, cost did not change with delay, but the decider's throughput fell from 148.7 to 59.8 parts per minute at 600\,ms, against 249.7 without waiting. In the tick-conversion comparison, the decider also killed fewer enemies per game in asynchronous play than under tick conversion (19.90 against 23.87) and reacted to a lone enemy more slowly (32.8 ticks against 24.9).

\paragraph{Limitations} A preregistered attempt to impose identical schedules by recording and replaying them (v9) failed its own fidelity check and is reported in Section~\ref{app:v9}; it is not used as evidence.

\section{Proofs}
\label{app:proofs}

Throughout, the horizon $H$ is finite, rewards lie in $[0,1]$, and an episode that ends early is extended with an absorbing zero-reward state, so every sum runs over $t=0,\dots,H-1$.

\paragraph{Proof of Proposition~\ref{prop:bias}} Write $J_{T}(Z) = J(\pi_T,Z)$ and $J_D(Z)=J(\pi_D,Z)$. The decomposition $J_T(Z_0)-J_D(Z_D) = [J_T(Z_0)-J_T(Z_D)] + [J_T(Z_D)-J_D(Z_D)]$ is an identity. In the first bracket the policy is fixed at $\pi_T$ and only the skip process changes; in the second the skip process is fixed at $Z_D$ and only the policy changes. Assumption~\ref{ass:exo} is what makes $Z_D$ a well-defined object shared by the two policies: without it, the skip process experienced by a policy would depend on the policy's own actions and the second bracket would mix a policy change with a skip change. If S runs under $Z_S$, the estimator of the quality component is
\[
\hat Q = J_T(Z_S) - J_D(Z_D) = \underbrace{J_T(Z_D) - J_D(Z_D)}_{Q} + \big[J_T(Z_S) - J_T(Z_D)\big].
\]
If $J_T$ depends on the skip process only through $\sigma(Z)$ up to a $\kappa$-Lipschitz function, $|J_T(Z_S)-J_T(Z_D)| \le \kappa\,|\sigma(Z_S)-\sigma(Z_D)|$, which is the stated bound. The same argument applied to the time component with $Z_0$ and $Z_S$ in place of $Z_0$ and $Z_D$ bounds the bias of the time component by $\kappa\,|\sigma(Z_S)-\sigma(Z_D)|$ with the opposite sign, so the two biases cancel in the total gap. \qed

\paragraph{Performance difference in the augmented process} Fix $Z$ and let $V^T_t(x) = \mathbb{E}\big[\sum_{u\ge t} r(s_u,a_u)\mid x_t=x\big]$ under $\pi_T$, and $Q^T_t(x,a) = r(s,a) + \mathbb{E}[V^T_{t+1}(x_{t+1}) \mid x_t=x, a_t=a]$, with $V^T_H \equiv 0$. Because $\pi_T$ is deterministic, $V^T_t(x) = Q^T_t(x,\pi_T(x))$.

\begin{lemma}
\label{lem:pdl}
For any policy $\pi_D$ in the augmented process, $J(\pi_T,Z)-J(\pi_D,Z) = \sum_{t=0}^{H-1}\mathbb{E}_{x\sim d^D_t}\big[Q^T_t(x,\pi_T(x)) - Q^T_t(x,\pi_D(x))\big]$.
\end{lemma}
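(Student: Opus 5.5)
The plan is a telescoping argument over hybrid policies, the standard performance-difference identity written for the finite-horizon augmented process. For $k=0,\dots,H$, let $\pi^{(k)}$ be the non-stationary policy that follows $\pi_D$ at ticks $t<k$ and $\pi_T$ at ticks $t\ge k$. Then $\pi^{(0)}=\pi_T$ and $\pi^{(H)}=\pi_D$, so
\[
J(\pi_T,Z)-J(\pi_D,Z)=\sum_{k=0}^{H-1}\big[J(\pi^{(k)},Z)-J(\pi^{(k+1)},Z)\big].
\]
Every hybrid respects the constraint $\pi(x_t)=a_{t-1}$ when $z_t=0$, since both components do. The chain $x_t=(s_t,a_{t-1},z_t,h_t)$ is Markov under any such policy. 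Because $Z$ is fixed, Assumption~\ref{ass:exo} lets the host chain be folded into the transition kernel without its law depending on which policy plays.

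The key step evaluates each term of the sum. The policies $\pi^{(k)}$ and $\pi^{(k+1)}$ coincide before tick $k$, both follow $\pi_D$ there, and both follow $\pi_T$ after tick $k$. So the law of $x_k$ under either is $d^D_k$, and the expected rewards collected before $k$ are equal and cancel. From tick $k$ on, $\pi^{(k)}$ earns $\mathbb{E}_{x\sim d^D_k}[V^T_k(x)]=\mathbb{E}_{x\sim d^D_k}[Q^T_k(x,\pi_T(x))]$, using the identity $V^T_t(x)=Q^T_t(x,\pi_T(x))$ for the deterministic teacher. The policy $\pi^{(k+1)}$ takes $a_k=\pi_D(x_k)$ and then follows the teacher. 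By the definition of $Q^T_k$, including the one-step transition of the augmented chain with the skip indicator and host state, it earns $\mathbb{E}_{x\sim d^D_k}[Q^T_k(x,\pi_D(x))]$. Subtracting the two and summing over $k$ gives the lemma.

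The main obstacle is bookkeeping rather than any estimate. First, the conditional expectation defining $Q^T_k$ must be the one realised by the hybrid, which requires the next augmented state to depend only on $(x_k,a_k)$. This is where Markovity of $x_t$ and the exogeneity of $h_t$ are used. Second, early termination must be handled by the absorbing zero-reward state. Third, $V^T$ must be defined with time-indexed values, since the process is non-stationary. With these checks in place, the identity is exact and holds for any $\pi_D$.
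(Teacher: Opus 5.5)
Your proof is correct, but it uses a different telescoping from the paper's. The paper fixes one trajectory of $\pi_D$ and writes $J(\pi_T,Z)=\mathbb{E}[V^T_0(x_0)]$. It then telescopes $\sum_t\big(V^T_t(x_t)-V^T_{t+1}(x_{t+1})\big)$ against the rewards that $\pi_D$ collects. Finally it conditions each summand on $(x_t,a_t)$, which turns $r(s_t,a_t)+V^T_{t+1}(x_{t+1})$ into $Q^T_t(x_t,\pi_D(x_t))$. You telescope over the hybrid policies $\pi^{(k)}$ instead, and your $k$-th difference equals the paper's $k$-th summand. With expectations over a trajectory of $\pi_D$, $J(\pi^{(k)},Z)=\mathbb{E}\big[\sum_{t<k}r(s_t,a_t)+V^T_k(x_k)\big]$, so $J(\pi^{(k)},Z)-J(\pi^{(k+1)},Z)=\mathbb{E}\big[V^T_k(x_k)-r(s_k,a_k)-V^T_{k+1}(x_{k+1})\big]$. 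Your route gives each term a direct meaning: the expected cost of letting the decider, rather than the teacher, act at exactly one tick on the decider's own state distribution, then handing control back. This shows why $d^D_k$ appears. The paper's route is more economical. It needs only the tower property along a single trajectory: no non-stationary hybrids, no check that they respect the constraint $\pi(x_t)=a_{t-1}$ on skipped ticks, and no cancellation of rewards before tick $k$. Your appeal to Assumption~\ref{ass:exo} is harmless but not what the lemma rests on. The paper's proof cites only the Markov property of the augmented state under the fixed $Z$, and exogeneity is needed in Proposition~\ref{prop:bias}, where S and D must share $Z_D$.
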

\begin{proof}
Let $x_0,a_0,x_1,\dots$ be a trajectory of $\pi_D$. Then $J(\pi_D,Z) = \mathbb{E}\big[\sum_t r(s_t,a_t)\big]$ and, since $x_0$ has the same law under both policies, $J(\pi_T,Z) = \mathbb{E}[V^T_0(x_0)]$. Telescoping,
\[
J(\pi_T,Z) - J(\pi_D,Z) = \mathbb{E}\Big[\sum_{t}\big(V^T_t(x_t) - r(s_t,a_t) - V^T_{t+1}(x_{t+1})\big)\Big],
\]
because $\sum_t (V^T_t(x_t)-V^T_{t+1}(x_{t+1})) = V^T_0(x_0) - V^T_H(x_H) = V^T_0(x_0)$. Conditioning each term on $(x_t,a_t)$ with $a_t=\pi_D(x_t)$ and using the Markov property of the augmented state, $\mathbb{E}[r(s_t,a_t)+V^T_{t+1}(x_{t+1})\mid x_t,a_t] = Q^T_t(x_t,\pi_D(x_t))$. Substituting $V^T_t(x_t)=Q^T_t(x_t,\pi_T(x_t))$ gives the claim.
\end{proof}

\paragraph{Proof of Proposition~\ref{prop:pdl}} Consider one term of Lemma~\ref{lem:pdl}. If $z_t=0$, both policies are constrained to $a_{t-1}$, which is part of $x_t$, so $\pi_T(x_t)=\pi_D(x_t)$ and the term is zero. If $z_t=1$ and $\pi_D(x_t)=\pi_T(x_t)$, the term is zero. Otherwise it is at most $\Delta_Q$ by definition. Hence each term is at most $\Delta_Q\,\Pr_{x\sim d^D_t}[z_t=1,\pi_D(x)\ne\pi_T(x)]$, and summing over $t$ gives $H\,\epsilon_D\,\Delta_Q$. If $\pi_T$ is greedy with respect to $Q^T$, every term is non-negative. \qed

\paragraph{Remarks} (i) The bound uses the decider's own state distribution $d^D$, which under asynchronous play includes the states produced by skipped ticks; agreement measured on the teacher's distribution does not enter it. (ii) $\Delta_Q$ can be much smaller than $H$: a wrong turn at one decision tick is usually corrected at the next, so its effect on the teacher's value is local. Disagreements whose action difference has no effect on the reward (firing without ammunition) contribute nothing. (iii) Proposition~\ref{prop:pdl} is an upper bound: it names the disagreement term that DAgger targets, but a smaller bound does not imply a smaller gap; whether the gap falls is shown by the experiments (Sections~\ref{sec:replication} and~\ref{sec:live}).

\section{Per-run results}
\label{app:runs}

Tables~\ref{tab:runs_v4b} and~\ref{tab:runs_v5} list every asynchronous evaluation run of the decomposition and intervention studies with all recorded metrics. ``Acc.\ dist.'' is hit accuracy over distinct games, ``Acc.\ raw'' over raw seeds; kills, shots, hits taken and ticks survived are per-episode means over raw seeds; ``Lone react.'' is the mean reaction to a lone enemy in ticks; ``Skip'' is the share of engine ticks skipped. Runs are listed in execution order.

\begin{table}[H]
\centering
\caption{Decomposition study (v4, v4b): five S/D pairs in execution order, seeds 850000--850029.}
\label{tab:runs_v4b}
\scriptsize\setlength{\tabcolsep}{2pt}
\begin{tabular}{lrrrrrrrr}
\toprule
Run & Acc. dist. & Acc. raw & Kills & Shots & Hits taken & Ticks & Lone react. & Skip \% \\ \midrule
S1 & 85.3 & 87.3 & 22.50 & 25.7 & 5.53 & 1250 & 29.2 & 11.3 \\
D4 & 80.7 & 81.6 & 19.37 & 23.7 & 5.20 & 1112 & 35.0 & 7.1 \\
S2 & 86.6 & 87.9 & 22.87 & 26.0 & 6.10 & 1235 & 25.5 & 5.9 \\
D5 & 83.7 & 81.9 & 18.83 & 23.0 & 6.00 & 1072 & 38.7 & 9.6 \\
S3 & 86.3 & 86.7 & 22.13 & 25.5 & 5.53 & 1243 & 25.9 & 6.9 \\
D6 & 83.2 & 82.4 & 19.17 & 23.3 & 6.03 & 1087 & 36.0 & 6.7 \\
S4 & 89.2 & 89.7 & 23.17 & 25.8 & 6.17 & 1273 & 26.4 & 8.8 \\
D7 & 76.8 & 78.0 & 18.13 & 23.2 & 5.47 & 1063 & 40.8 & 10.2 \\
S5 & 86.7 & 88.4 & 22.30 & 25.2 & 5.23 & 1207 & 26.4 & 8.9 \\
D8 & 75.8 & 77.8 & 18.40 & 23.7 & 5.53 & 1066 & 36.4 & 10.5 \\
\bottomrule
\end{tabular}

\end{table}

\begin{table}[H]
\centering
\caption{Intervention study (v5): five triads in execution order (triads 2 and 4 in reverse order), seeds 850000--850029.}
\label{tab:runs_v5}
\scriptsize\setlength{\tabcolsep}{2pt}
\begin{tabular}{lrrrrrrrr}
\toprule
Run & Acc. dist. & Acc. raw & Kills & Shots & Hits taken & Ticks & Lone react. & Skip \% \\ \midrule
S1 & 86.5 & 87.2 & 22.47 & 25.8 & 5.53 & 1234 & 27.0 & 7.4 \\
N1 & 79.5 & 80.3 & 20.50 & 25.5 & 5.57 & 1159 & 32.9 & 15.7 \\
O1 & 74.5 & 74.2 & 17.60 & 23.7 & 5.70 & 1050 & 35.1 & 19.0 \\
O2 & 72.4 & 71.1 & 16.10 & 22.6 & 5.17 & 997 & 38.8 & 25.3 \\
N2 & 82.0 & 83.3 & 21.37 & 25.6 & 5.47 & 1192 & 27.2 & 11.1 \\
S2 & 89.5 & 88.2 & 22.40 & 25.4 & 6.00 & 1245 & 30.1 & 6.8 \\
S3 & 88.6 & 87.8 & 22.63 & 25.8 & 5.93 & 1247 & 26.0 & 8.0 \\
N3 & 86.3 & 87.1 & 22.37 & 25.7 & 5.43 & 1164 & 28.0 & 8.1 \\
O3 & 81.3 & 82.0 & 19.43 & 23.7 & 5.13 & 1081 & 31.7 & 7.2 \\
O4 & 82.5 & 82.1 & 19.47 & 23.7 & 5.77 & 1084 & 34.0 & 6.7 \\
N4 & 84.0 & 85.1 & 21.67 & 25.5 & 6.23 & 1191 & 28.0 & 8.3 \\
S4 & 87.0 & 87.9 & 22.83 & 26.0 & 5.80 & 1314 & 28.4 & 9.6 \\
S5 & 84.9 & 86.1 & 22.10 & 25.6 & 5.40 & 1195 & 27.4 & 9.9 \\
N5 & 82.2 & 84.0 & 21.70 & 25.8 & 6.07 & 1163 & 29.8 & 9.4 \\
O5 & 82.3 & 82.2 & 19.57 & 23.8 & 5.10 & 1092 & 37.5 & 9.3 \\
\bottomrule
\end{tabular}

\end{table}

\begin{table}[H]
\centering
\caption{Skip-adjusted contrasts per triad of the intervention study (exploratory). Adjusted contrasts move each decider to the skipped-tick share of the arm it is compared with using the pooled slope $\kappa$ of Section~\ref{app:from-runvar} (Proposition~\ref{prop:bias}). Values in percentage points, skipped-tick shares in percent.}
\label{tab:adjusted}
\scriptsize
\begin{tabular}{lrrrrrrr}
\toprule
Triad & Skip S & Skip N & Skip O & S$-$N & S$-$N adj. & N$-$O & N$-$O adj. \\ \midrule
1 & 7.4 & 15.7 & 19.0 & 7.0 & 1.9 & 5.1 & 3.0 \\
2 & 6.8 & 11.1 & 25.3 & 7.5 & 4.8 & 9.6 & 1.0 \\
3 & 8.0 & 8.1 & 7.2 & 2.3 & 2.2 & 4.9 & 5.5 \\
4 & 9.6 & 8.3 & 6.7 & 3.0 & 3.8 & 1.5 & 2.5 \\
5 & 9.9 & 9.4 & 9.3 & 2.7 & 3.0 & -0.1 & -0.0 \\
\bottomrule
\end{tabular}

\end{table}

\begin{figure}[H]
\centering
\includegraphics[width=0.8\linewidth]{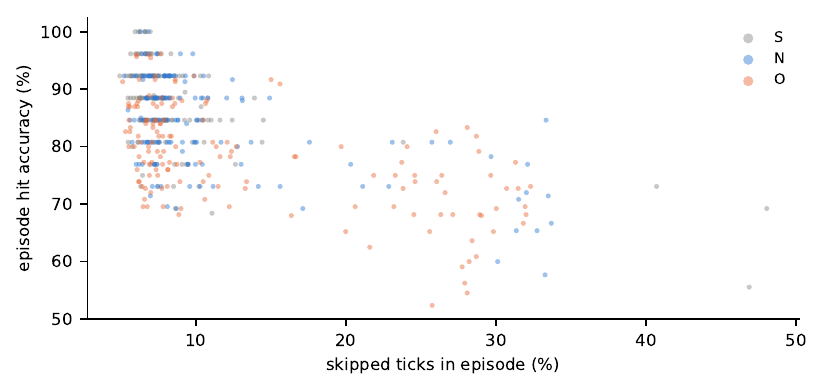}
\caption{Episode hit accuracy against the share of skipped ticks in that episode for the 450 episodes of the intervention study.}
\label{fig:skip_scatter}
\end{figure}

\section{DAgger collection and training}
\label{app:dagger}

Table~\ref{tab:collection} summarises the two collection rounds and Figure~\ref{fig:training} the training curves. Table~\ref{tab:confusion} gives the confusion between the teacher's action and the decider's action, as a percentage of each teacher action. It covers the decider's own asynchronous states: decisions with an enemy in view and ammunition left. Most on-policy errors are turns in the wrong direction when the teacher turns right; strafing and firing decisions are almost always correct.

\begin{table}[H]
\centering
\caption{DAgger collection rounds. ``Engaged'' counts decisions with an enemy in view.}
\label{tab:collection}
\scriptsize
\begin{tabular}{lllrrrr}
\toprule
Round & Driver & Seeds & Rows & Engaged & Skip \% & Minutes \\ \midrule
1 & a814587e03 & 720000--720059 & 44,820 & 43,707 & 11.5 & 31 \\
2 & 240805651d & 720100--720159 & 47,905 & 46,725 & 11.1 & 33 \\
\bottomrule
\end{tabular}

\end{table}

\begin{table}[H]
\centering
\caption{On-policy confusion (rows: teacher action, columns: decider action, \% of row). TL/TR turn left/right, F fire, SL/SR strafe, FSL/FSR fire while strafing.}
\label{tab:confusion}
\scriptsize
\begin{tabular}{lrrrrrrrr}
\toprule
Teacher & $n$ & TL & TR & F & SL & SR & FSL & FSR \\ \midrule
\multicolumn{9}{l}{\emph{Round 1 (original decider in control)}, 24,547 decisions} \\
TL & 11,635 & 93.2 & 4.0 & 0.0 & 0.6 & 0.4 & 1.0 & 0.9 \\
TR & 7,190 & 20.7 & 73.5 & 0.0 & 1.3 & 1.2 & 1.3 & 2.0 \\
SL & 992 & 0.2 & 0.2 & 0.0 & 98.8 & 0.1 & 0.7 & 0.0 \\
SR & 819 & 0.2 & 0.2 & 0.0 & 0.0 & 99.5 & 0.0 & 0.0 \\
FSL & 1,658 & 0.1 & 0.1 & 0.0 & 0.2 & 0.0 & 99.1 & 0.5 \\
FSR & 2,253 & 0.2 & 0.0 & 0.0 & 0.0 & 0.8 & 0.0 & 98.9 \\
\midrule
\multicolumn{9}{l}{\emph{Round 2 (decider after one round in control)}, 25,529 decisions} \\
TL & 11,777 & 95.5 & 2.4 & 0.0 & 0.3 & 0.4 & 0.7 & 0.7 \\
TR & 7,974 & 18.2 & 75.7 & 0.0 & 1.1 & 0.9 & 1.9 & 2.1 \\
SL & 926 & 0.0 & 0.0 & 0.0 & 99.9 & 0.0 & 0.1 & 0.0 \\
SR & 785 & 0.0 & 0.0 & 0.0 & 0.0 & 99.9 & 0.0 & 0.1 \\
FSL & 1,697 & 0.3 & 0.1 & 0.0 & 2.7 & 0.0 & 96.8 & 0.1 \\
FSR & 2,370 & 0.0 & 0.1 & 0.0 & 0.1 & 0.4 & 0.1 & 99.2 \\
\bottomrule
\end{tabular}

\end{table}

\begin{figure}[H]
\centering
\includegraphics[width=0.8\linewidth]{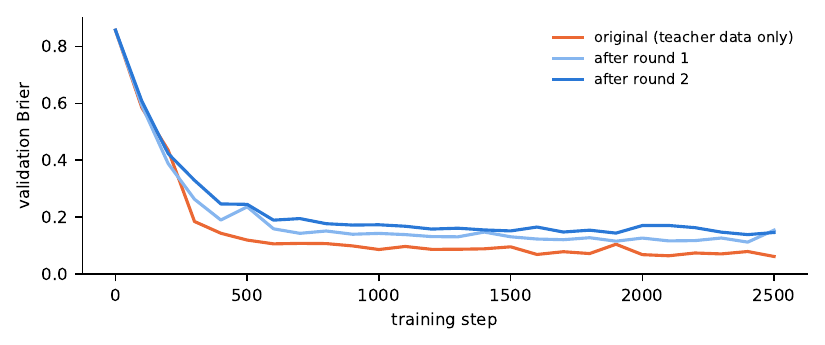}
\caption{Validation Brier score during training for the original decider and after each DAgger round (seed 31). The validation split of the later rounds includes on-policy rows.}
\label{fig:training}
\end{figure}

\section{Evaluation loop of the tick-converted evaluations}
\label{app:loopsupp}
\begin{figure}[H]
\centering
\includegraphics[width=\linewidth]{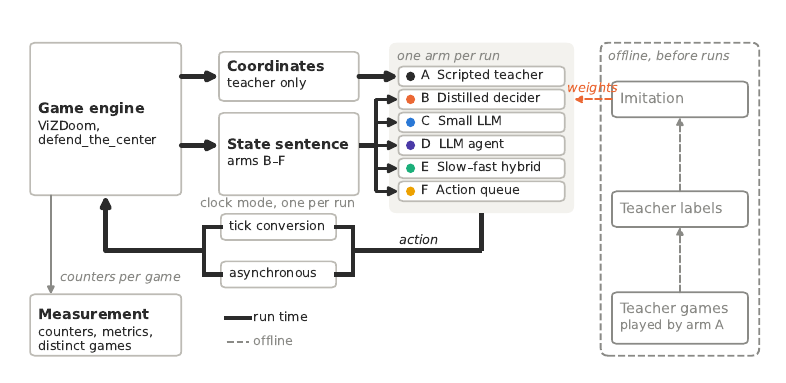}
\caption{The evaluation loop of the earlier tick-converted evaluations, including the language-model comparator arms (labelled A--F there; in the main text D denotes the original decider). The engine sends exact coordinates to the scripted teacher and a state sentence, written by a perception script, to the learned arms. One arm plays each run, under tick conversion (which waits for each decision and holds the previous action for $m(L)$ ticks) or asynchronous play on wall-clock time.}
\label{fig:loop}
\end{figure}

\section{Decider network}
\label{app:decider}
\begin{figure}[H]
\centering
\includegraphics[width=\linewidth]{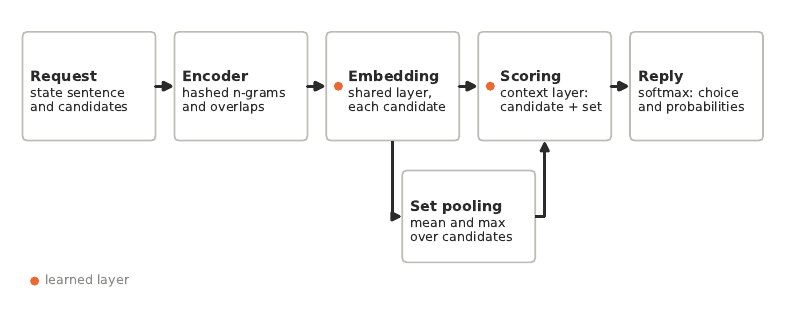}
\caption{The decider scores each candidate action against the state sentence. A request carries the state sentence and the seven candidate actions with their descriptions. The encoder turns each candidate, with the sentence, into 1,785 hashed word and character n-gram and overlap features; a shared layer (1,785 to 128, ReLU) embeds each candidate; set pooling (mean and max, 256) summarises all candidates; a context layer (the candidate's 128 values with the 256 pooled values, 384 to 96, ReLU) and a linear score (96 to 1) rate each candidate; a softmax with a fitted temperature returns the choice and a probability for every candidate. 265,665 parameters; about 2\,ms per decision on one CPU core; the same code serves both scenes and only the weights change. Training uses the teacher's labels with a sampled policy-gradient estimate of the Brier score, keeps the step with the best validation Brier score and fits the temperature on a calibration split. No pretrained language model is used.}
\label{fig:decider}
\end{figure}

\section{Examples of on-policy errors}
\label{app:qual}

\section{Latency dose-response under tick conversion}
\label{app:dosesupp}

\subsection{Tick-converted latency dose response}
\label{app:from-dose}

Under tick conversion, latency lowers skill in steps at tick boundaries, a property of the rule that real play need not follow. When the teacher is given a fixed latency under tick conversion, its skill holds and then drops at discrete latencies (Figure~\ref{fig:dose}). With 0, 10 or 20\,ms it hits 99.5\% of its shots, kills 25.03 enemies per game and reacts to a lone enemy in 25.4 ticks. At 30\,ms accuracy falls to 87.8\%, kills to 16.97 and reaction rises to 42.5 ticks; 50\,ms produces exactly the same numbers. Accuracy is 44.4\% at 100\,ms, 16.1\% at 200\,ms and zero at 500\,ms. Any latency under one tick gives $m=0$, and 30 and 50\,ms both give $m=1$.

\begin{figure}[!tbp]
\centering
\includegraphics[width=\linewidth]{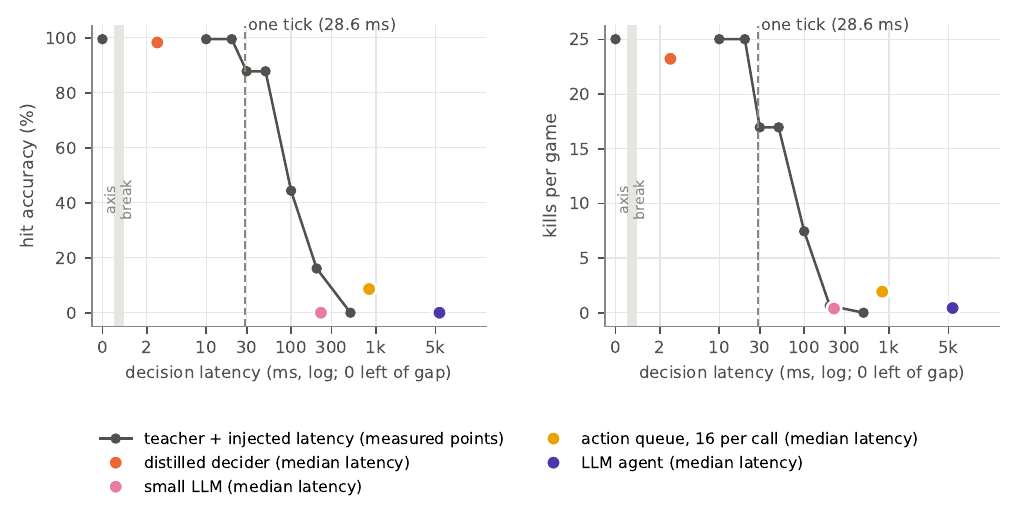}
\caption{Under tick conversion, injected latency leaves the teacher's skill intact below one tick and removes it quickly above, and slower controllers fall on the degraded side. Comparator arms are placed at the median of their per-episode median latency. They are context rather than latency-only comparisons, because their policies differ. The arms are the distilled decider; a 1.7B-parameter open-weight large language model (LLM) choosing every action (small LLM); the same model returning 16 actions per call (action queue); and a command-line language-model agent (LLM agent), on the 20 raw seeds of the comparator evaluation. The gray band is an axis break: 0 ms is plotted to its left. Teacher with fixed injected latency (one run per latency on the 100 seeds of the second tick-converted evaluation; points are measured values joined by lines, not a fitted step); These tick-converted runs were preregistered as single runs without intervals.}
\label{fig:dose}
\end{figure}

\section{Duplicated seeds}
\label{app:seeds}
Running the deterministic teacher on 200 seeds produced 103 distinct games, and a single game accounted for 98 of the seeds; two further ranges of 100 seeds gave 47 and 58 distinct games (Figure~\ref{fig:seeds}). The 30 evaluation seeds of the main studies give 18, and the 100 seeds of the replication give 56. Intervals computed over seeds treat copies of one game as independent evidence \citep{colas2018}.

\begin{figure}[H]
\centering
\includegraphics[width=0.6\linewidth]{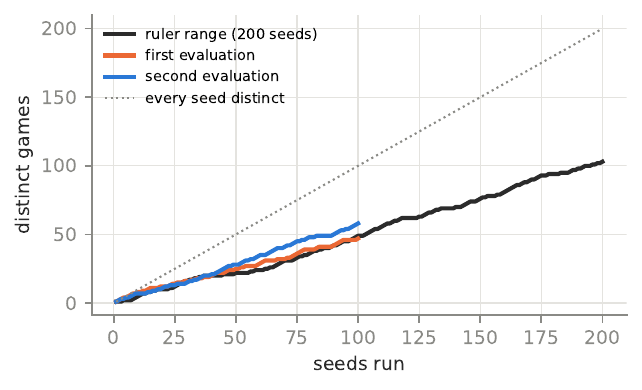}
\caption{Distinct teacher games as seeds are added. The dotted line is the case in which every seed gives a new game.}
\label{fig:seeds}
\end{figure}

\section{Metric qualification}
\label{app:ruler}
Before training, we checked that the event metrics separate scripts known to differ (Figure~\ref{fig:ruler}). On 200 seeds the teacher hit 99.7\% of its shots and reacted in 25.3 ticks; a variant without strafing hit 99.8\% and reacted in 22.3 ticks; a variant that fires whenever the target is within 15 degrees hit 31.9\% and reacted in 45.2 ticks; the teacher with 100\,ms latency hit 43.1\% and reacted in 104.6 ticks, and with 500\,ms it hit nothing.

\begin{figure}[H]
\centering
\includegraphics[width=\linewidth]{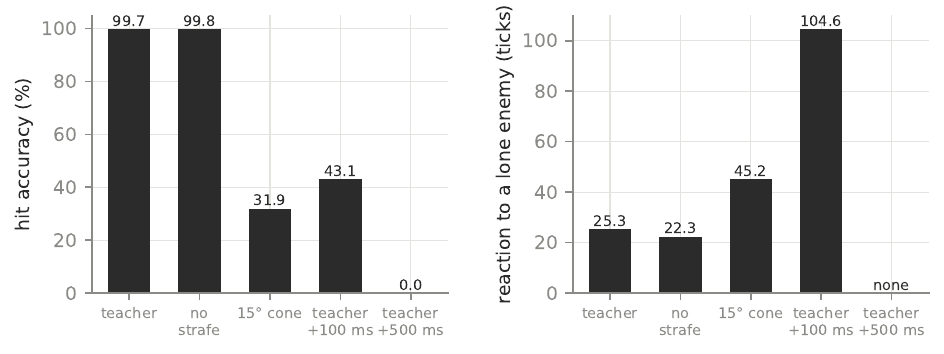}
\caption{Qualifying the metrics on 200 seeds before any learning.}
\label{fig:ruler}
\end{figure}

\section{Pilot measurement and corrections}
\label{app:pilot}
The DAgger preregistration cited a pilot on two design games in which the original decider agreed with the teacher on 69.7\% of its own engaged decisions against 95.8\% on teacher states. That pilot included decisions taken with no ammunition left, where the teacher's label has no effect on accuracy; Table~\ref{tab:diag} replaces it with agreement on decisions with ammunition left, computed before any evaluation of the new decider. A run-level summary field for held ticks recorded zero in every asynchronous run because it counted only modelled delays; all skipped-tick figures come from the per-episode engine clock. Training of the first DAgger round was interrupted by the host and restarted from the saved collection data before any evaluation. During manuscript preparation, an AI agent asked to typeset the paper returned a version shortened from 7,545 to about 1,100 words while reporting completion, and another dropped most citations from a draft; both outputs were detected by word, page and citation counts and discarded.

\section{Peripheral comparators and host incidents}
\label{app:llm}
Under tick conversion, a 1.7B-parameter open-weight language model (qwen3:1.7b served by Ollama; Table~\ref{tab:env}) choosing every action fired no shots without a notation legend or with eight worked examples and hit nothing with the legend; a commercial language-model agent (gpt-6-sol through the Codex command-line tool 0.156.1 with its default settings, run on 2026-09-23; Table~\ref{tab:env}) at a median of more than five seconds per decision killed 0.43 enemies per game; asking the small model for 16 actions at a time raised accuracy to 8.6\% and kills to 1.93. A slow-fast hybrid in which the small model picks a target about once per second while the teacher's rules aim and fire hit 100.0\% of its shots; its skill comes from the teacher's rules. Per-decision cost differs by orders of magnitude (Figure~\ref{fig:cost}). In earlier diagnostic asynchronous runs the zero-latency teacher hit 92.4\%, 84.8\% and 82.8\% of its shots with 6.3\%, 15.3\% and 15.3\% skipped ticks at low mean processor load; raising the timer resolution did not change the skip rate.

\begin{figure}[H]
\centering
\includegraphics[width=0.82\linewidth]{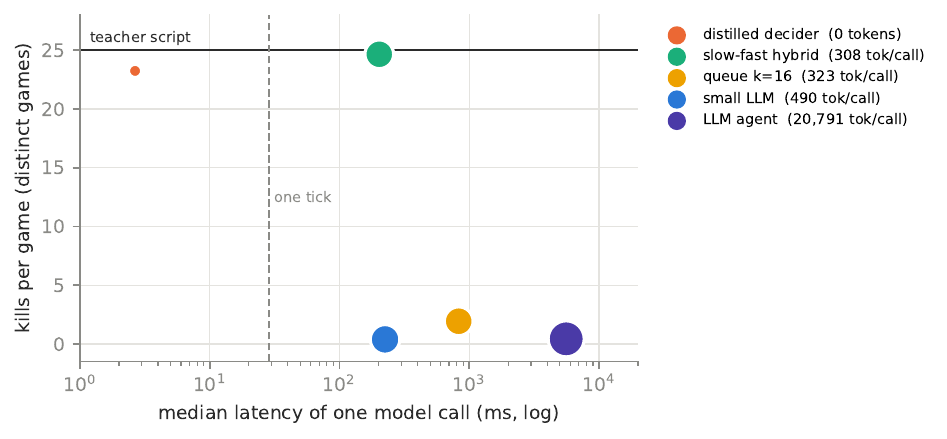}
\caption{Kills per game against median latency of one model call, under tick conversion (peripheral comparison).}
\label{fig:cost}
\end{figure}

\begin{table}[H]
\centering
\caption{Experimental environment. Weight digests are the first 10 hexadecimal digits of SHA-256.}
\label{tab:env}
\small
\begin{tabular}{p{0.26\linewidth}p{0.66\linewidth}}
\toprule
Host 1 & AMD Ryzen 7 7800X3D (8 cores, 16 threads), 31 GB RAM, NVIDIA GeForce RTX 3060, Windows 11 Education 10.0.26200 \\
Host 2 & AMD Ryzen 5 5600G (6 cores, 12 threads), 31 GB RAM, Ubuntu 24.04.4 LTS, Linux 6.11.0-29-generic \\
Game and runtime & ViZDoom 1.3.1 (asynchronous player mode, 35 ticks per second), NumPy 2.5.3; Python 3.14.0 (host 1), 3.12.3 (host 2) \\
Training & PyTorch 2.13.0 CPU (host 1), 2.14.0 CPU (host 2); 2,500 steps, training seed 31 (and 32 for N32) \\
Decider & hashed n-gram encoder to 1,785 features, 265,665 parameters, served over local HTTP; original O a814587e03, after one DAgger round 240805651d, after two rounds N 8ce6b41475, seed-32 replicate 2f4785728c, data-volume control M 2061ae8b; second scene O a164d90514, N 11409b5c60 \\
Open-weight comparator & qwen3:1.7b through Ollama on host 1 (model layer 3d0b790534) \\
Commercial comparator & gpt-6-sol through the Codex command-line tool 0.156.1, default settings, run on 2026-09-23 \\
\bottomrule
\end{tabular}
\end{table}

\section{Trace replay (preregistration v9): a failed fidelity check}
\label{app:v9}
To enforce identical skip schedules by design, preregistration v9 recorded asynchronous runs of the arms N, O and S (seeds 860000--860049, two chunks of 25) together with the tick at which each decision was observed and applied, and replayed these schedules tick by tick in synchronous mode for six policies (teacher, original decider, the deciders after one and two DAgger rounds, the seed-32 replicate and the data-volume control), plus synthetic schedules that skip each tick independently with probability 0 to 0.30 or in bursts. The preregistered first check (B1) replays each recorded arm's own policy on its own schedule and counts episodes whose shots, hits and kills match exactly; below 90\% the replay was not to be read as the asynchronous run.

\begin{table}[H]
\centering
\caption{Fidelity of trace replay (v9, B1). Accuracy and skipped-tick share pooled over the 50 episodes of each arm.}
\label{tab:v9}
\small
\begin{tabular}{@{}llrrrr@{}}
\toprule
Recorded arm & Exact matches & \multicolumn{2}{c}{Hit accuracy (\%)} & \multicolumn{2}{c}{Skipped ticks (\%)} \\
 & & recorded & replayed & recorded & replayed \\ \midrule
N, DAgger decider & 0 of 50 & 87.8 & 99.2 & 8.2 & 8.2 \\
O, original decider & 0 of 50 & 80.9 & 98.8 & 8.6 & 8.7 \\
S, teacher & 0 of 50 & 85.2 & 99.3 & 9.6 & 9.6 \\
\bottomrule
\end{tabular}
\end{table}

Replay reproduced the skipped-tick share of every arm but none of the 150 episodes, and it raised hit accuracy to about 99\% for all three policies. In 124 of the 150 replayed episodes the number of kills exceeds the number of shots counted, which happens in none of the recorded asynchronous episodes. The replay loop presses the previous button again on every tick without a new decision, while the shot counter records shots only on ticks where a new decision is applied; shots fired while a button is held therefore kill enemies without being counted. The replay thus implements the held-action rule of tick conversion rather than asynchronous play, and its own self-test, which compared it with the tick-conversion runner, could not detect this. Under the preregistered rules none of the three hypotheses is supported: the gain of the DAgger decider over the original on the schedules recorded for N is $-$0.25 points [$-$1.01, +0.52] (29 distinct games), the time components do not follow the order of the skipped-tick shares, and the time component at a 30\% synthetic skip rate is [$-$0.08, +1.06] points. Because the fidelity check failed, we do not use these numbers as evidence about asynchronous play. Because the replay's shot counter misses shots fired while a button is held, its near-99\% accuracy is not evidence about held actions either. A separate tick-level audit of the asynchronous engine settled what happens on pending ticks. With a fixed wait of 0, 60 or 120\,ms before each action, each action step spanned 1, 3 or 5 engine ticks. Among decisions that chose no turn right after a turn (163, 40 and 14 decisions), the view angle changed in every one; among those that followed another action (704, 18 and 2 decisions), it changed in none. The engine therefore keeps the previous action on pending ticks, as stated in Section~\ref{sec:setting}. The cause was found later: in asynchronous mode the engine refreshes its clock only when an action is sent, so the recorded application ticks always equalled the observation ticks and the replay acted on fresh states. An emulator that applies each decision one tick after the next observation reproduces the recorded games (Section~\ref{sec:controlled}). A curve study over added delays (v8) was withdrawn before any episode ran, in favour of this design.

\section{Controlled-timing emulator}
\label{app:controlled}
The emulator (\texttt{emu\_s1.py}) runs the engine synchronously and rebuilds asynchronous timing from a schedule of observation ticks. Decision $i$ reads the state at its observation tick $t_i$; its action takes effect from tick $t_{i+1}+L$ and holds until the next action takes effect; before the first action no button is pressed. Metrics are taken where the asynchronous runner takes them. Replaying the recorded actions of the six v9 traces (150 episodes) reproduced kills and the death tick in 0, 136 and 0 episodes for $L=0$, 1 and 2, and we fixed $L=1$ before the preregistered checks.

Two preregistered fidelity checks recorded fresh asynchronous episodes on host 1 and let each policy decide again on its own recorded schedule. v16 (seeds 884000--884024) required all six counts (shots, hits, kills, hits taken, damage, death tick) to match exactly in at least 85\% of episodes per policy; the teacher, O and N reached 20, 20 and 21 of 25, and all five game counts matched in 75 of 75. The mismatches were in the death tick: the recorder does not log the pending interval after its last observation, so the emulator fills it from the recorded gaps. v16b (seeds 885000--885024) required the five game counts to match and the death tick to be within two ticks; the teacher, O and N reached 24, 21 and 24 of 25 against the same 85\% threshold, and all five game counts matched in 25, 23 and 25. Hit accuracy matched in all 150 episodes of the two checks. Both gates failed. In v16b every failing episode had a wait of at least 5 ticks after its last observation (up to 19), and its episodes skipped 19.4--39.4\% of ticks, against 7.9--11.3\% in the v9 schedules. v17 (seeds 886000--886024) recorded that wait and appended it to the schedule as a final gap, so that the next observation cannot come before the recorded end. Its criterion, fixed before recording, required the five game counts to match and the emulated end to fall after the last observation and no later than the recorded end, in at least 85\% of episodes per policy. The teacher, O and N reached 25, 25 and 25 of 25, and also 25 of 25 under the v16b criterion. The v17 episodes skipped 8.1--9.8\% of ticks and no wait exceeded 2 ticks, so the correction itself was not exercised on fresh data; applied after the fact to the v16b episodes, the recorded wait and the window criterion together match all 75 (exploratory).

The main emulation imposed the v9 schedules of S, O and N on every policy (seeds 860000--860049, 29 distinct games) and synthetic schedules with independent skipping at 0, 4, 8, 12, 16 and 20\% and bursty skipping at 10\% (seeds 860000--860099, 56 distinct games). We reran it with the final gap of each recorded schedule imposed (v17); the synthetic schedules have no recorded end and are unchanged, and on the recorded schedules the estimates moved by at most 0.15\pp.

\subsection{Controlled-timing emulator: how asynchronous timing was rebuilt}
\label{app:from-emulator}

Proposition~1 requires S and the decider to face the same skip process, and the studies above matched only the share of skipped ticks. We therefore rebuilt asynchronous play in the synchronous engine, where one schedule can be imposed on every policy. In ViZDoom's asynchronous mode the state and the episode clock are refreshed only when an action is sent, so the ticks at which v9 recorded decisions being applied always equalled the observation ticks, and its replay acted on fresh states (Section~\ref{app:v9}). We rebuilt the timing from the recorded observation ticks alone: the action chosen on the state at one observation takes effect one tick after the next observation, and the previous action holds until then. Replaying recorded actions this way reproduced kills and the death tick in 136 of 150 recorded episodes, and in none with a lag of zero or two ticks. With each policy deciding again on its own recorded schedule, hit accuracy matched in all 150 fresh episodes of two preregistered checks and all five game counts in 148, but the death tick differed in some episodes and both preregistered fidelity gates failed narrowly. A third check on fresh seeds (v17), with the wait after the last observation recorded and imposed, passed (Section~\ref{app:controlled}); the results in Section~\ref{sec:controlled} come from the emulation rerun with that schedule.

\subsection{Live identical-delay study (v18)}
\label{app:live}

v18 ran on host 2 with seeds 887000--887099. Distinct games were identified by the tick-converted teacher on the same seeds (100 reference episodes); all 100 seeds gave distinct games. Seed 887000$+j$ used the observation gaps of episode $j \bmod 50$ of the v9 host-1 recordings of O, repeated from the start if an episode lasted longer. After choosing, each arm waited until (gap $-$ 1) $\times$ 28.571\,ms plus a fixed offset had passed since the observation; the offset, 14\,ms, was chosen in a 40-episode pilot on other seeds as the value with the highest share of realised gaps equal to the target (99.91--99.912\% for all four values tried). Arms ran one at a time in blocks of 25 seeds, in an order fixed before recording, and each session checked the frozen hashes of code, schedules and weights before it started.

The preregistered checks were that the realised gap equals the target gap in at least 90\% of decisions in every waiting arm (99.9\% in each) and that at least 85\% of decider episodes skip within 0.8--1.25 times the share of the teacher on the same seed and session (100\% for O, N and N32). Figure~\ref{fig:live_fid} shows target against realised gaps and Figure~\ref{fig:live_runs} the lengths of skipped runs and the per-episode skipped share.

\begin{figure}[H]
\centering
\includegraphics[width=\linewidth]{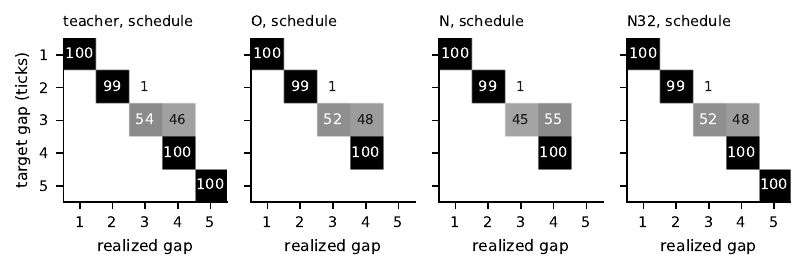}
\caption{Schedule adherence in v18: share of decisions by target and realised gap (ticks), per waiting arm.}
\label{fig:live_fid}
\end{figure}

\begin{figure}[H]
\centering
\includegraphics[width=\linewidth]{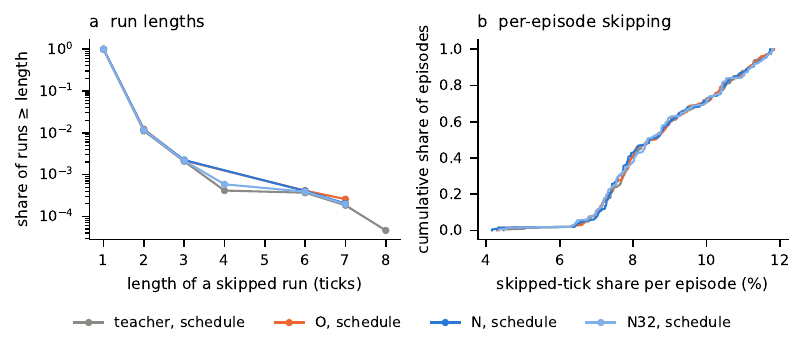}
\caption{Skipping in v18: (a) share of skipped runs at least as long as a given length; (b) cumulative share of episodes by skipped-tick share.}
\label{fig:live_runs}
\end{figure}

Figure~\ref{fig:live_slopes} shows the quality component of O game by game.

\begin{figure}[H]
\centering
\includegraphics[width=\linewidth]{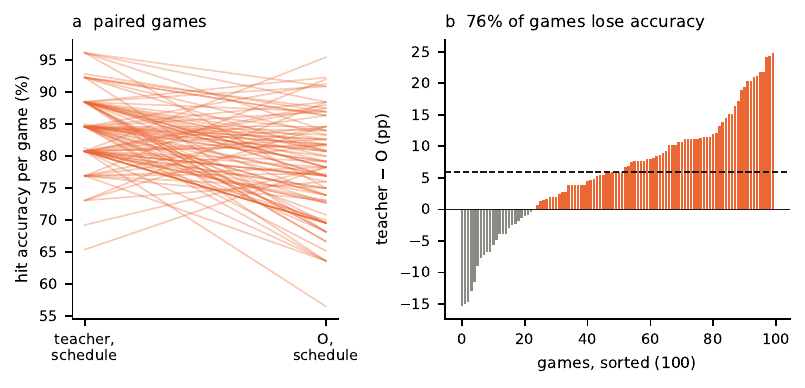}
\caption{Quality component of O per game in v18: (a) teacher and O on the same schedule, one line per game; (b) teacher $-$ O sorted, dashed: mean.}
\label{fig:live_slopes}
\end{figure}

Figure~\ref{fig:live_sessions} compares the two sessions game by game.

\begin{figure}[H]
\centering
\includegraphics[width=\linewidth]{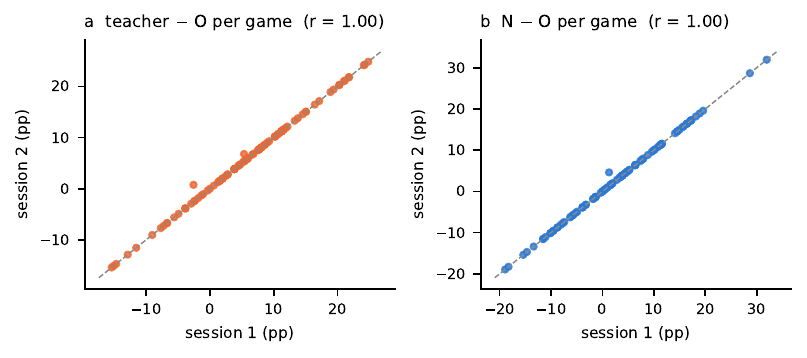}
\caption{Session 1 against session 2 in v18, per game: (a) teacher $-$ O; (b) N $-$ O. Dashed: equality.}
\label{fig:live_sessions}
\end{figure}

\begin{figure}[H]
\centering
\includegraphics[width=\linewidth]{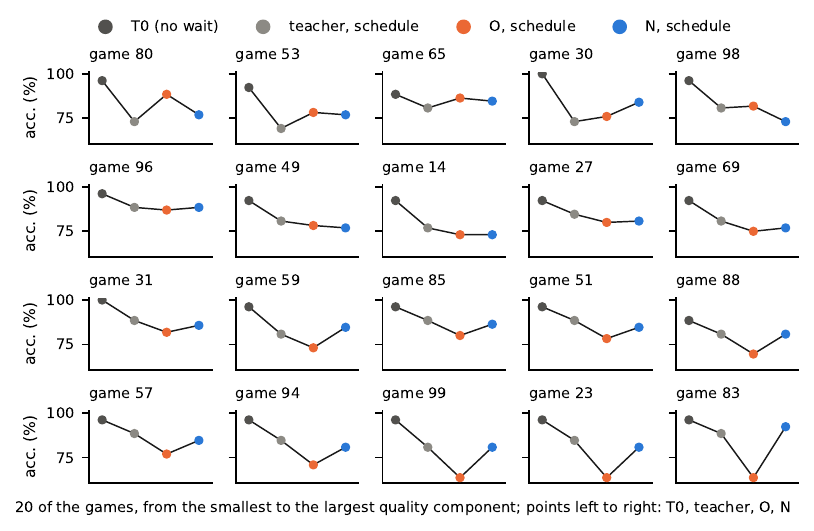}
\caption{T0, teacher, O and N in 20 of the v18 games, from the smallest to the largest quality component of O.}
\label{fig:live_grid}
\end{figure}

Figure~\ref{fig:live_grid} shows the four arms in 20 of the games.

Figure~\ref{fig:live_emu} sets the live estimates beside those of the synchronous emulator.

\begin{figure}[H]
\centering
\includegraphics[width=0.62\linewidth]{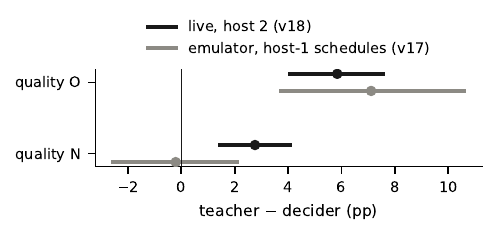}
\caption{Quality components in live play (v18, host 2, 100 games) and in the synchronous emulator on host-1 schedules (v17 rerun, 29 games); different games.}
\label{fig:live_emu}
\end{figure}

Figure~\ref{fig:live_quart} shows accuracy against disagreement within quartiles of the skipped share.

\begin{figure}[H]
\centering
\includegraphics[width=\linewidth]{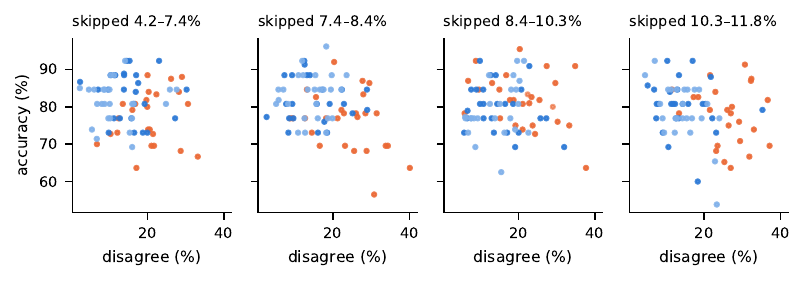}
\caption{Accuracy against disagreement with the teacher per v18 episode, in quartiles of the skipped-tick share; colours as in Figure~\ref{fig:livemech}.}
\label{fig:live_quart}
\end{figure}

\section{Additional views of the data}
\label{app:views}
The figures in this section are generated from the per-episode result files and the offline decision logs by \texttt{make\_figs\_v5.py}; they show the same studies from angles the main text summarises in a single number. A separate data atlas (Supplementary File 2, 34 pages, 46 figures) applies one standard battery to every study of Table~\ref{tab:ledger}, including the failed, withdrawn and exploratory ones: episode distributions, episode-level relation to skipped ticks, secondary metrics, decision time, run order by block, live agreement with the teacher where it was logged, and every interval stored in each analysis file. It is generated by \texttt{make\_study\_battery.py} and \texttt{build\_atlas\_pdf.py} from the result files; a coverage check (\texttt{results\_sprint.py}) confirms that every study has an artifact or a stated reason for each of ten angles.

\begin{figure}[H]
\centering
\includegraphics[width=0.88\linewidth]{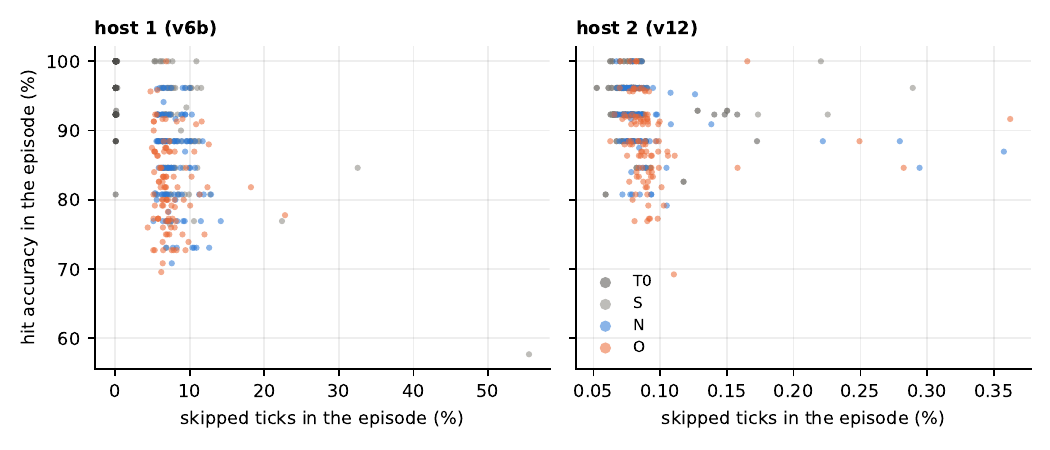}
\caption{Per-episode skipped-tick share and hit accuracy in defend\_the\_center, same 100 seeds on both hosts (v6b, v12). On host 1 the arms that wait for a server skip 5--12\% of ticks in 90\% of episodes (5th--95th percentile), with single episodes up to 56\%; the zero-latency teacher and every arm on host 2 skip below 0.4\%. Note the different horizontal ranges: the hosts differ by two orders of magnitude.}
\label{fig:episodes}
\end{figure}

\begin{figure}[H]
\centering
\includegraphics[width=0.88\linewidth]{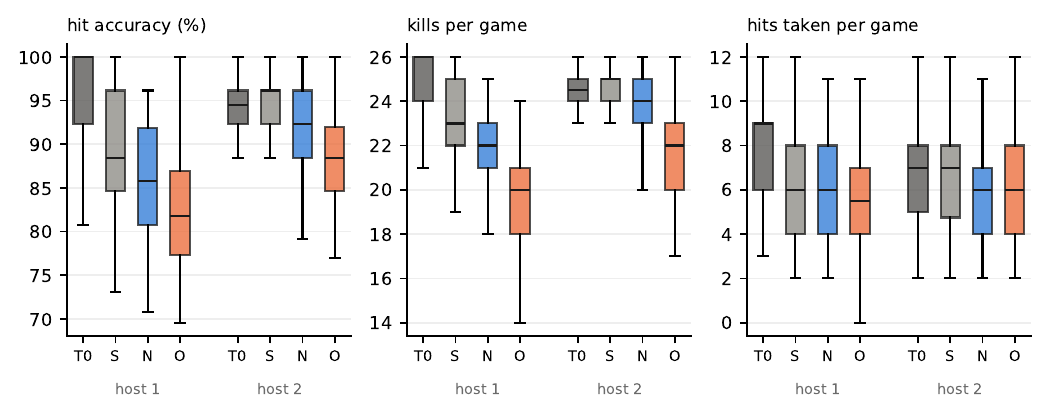}
\caption{Episode-level distributions of three metrics by arm and host (defend\_the\_center, v6b and v12; boxes: quartiles; whiskers: 1.5 interquartile ranges; outliers hidden). On host 2 the latency-matched teacher S is indistinguishable from T0 on all three metrics, while the ordering S $>$ N $>$ O in accuracy and kills persists.}
\label{fig:dists}
\end{figure}

\begin{figure}[H]
\centering
\includegraphics[width=0.88\linewidth]{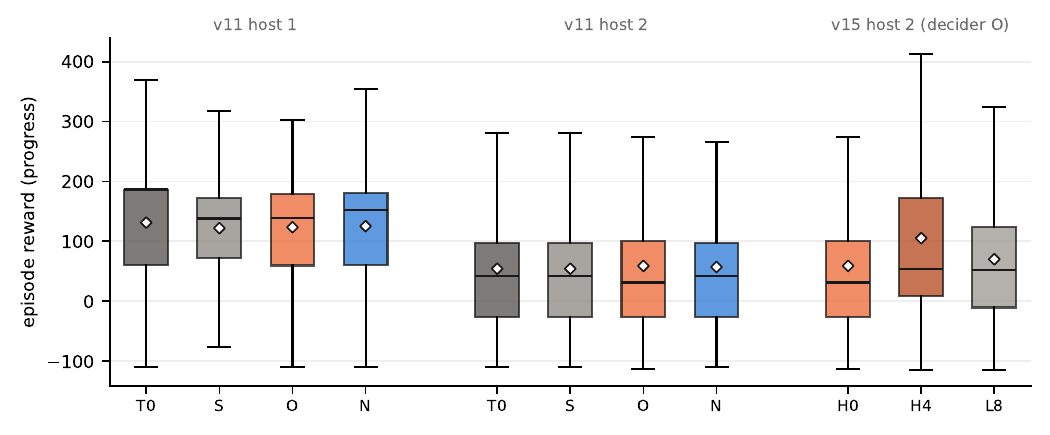}
\caption{Episode rewards in deadly\_corridor (v11 on both hosts, v15 on host 2); diamonds: episode means. Episode rewards vary far more between games than between arms, which is why the game bootstrap intervals of this scene are wide (Table~\ref{tab:hosts}).}
\label{fig:corridor}
\end{figure}

\begin{figure}[H]
\centering
\includegraphics[width=0.88\linewidth]{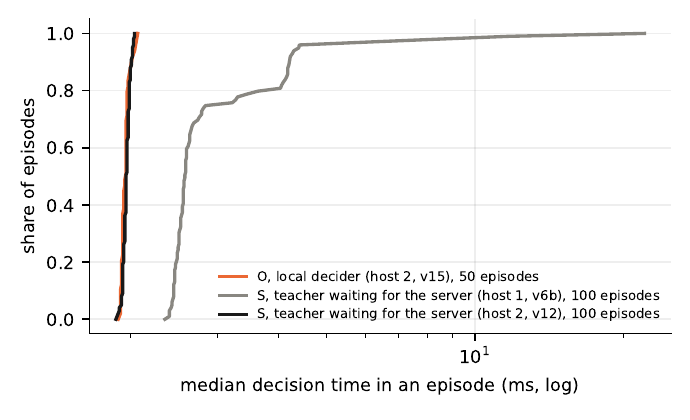}
\caption{Decision time: distribution over episodes of the median decision time (log scale) for the local decider on host 2 and the latency-matched teachers on both hosts, which wait for a real call to the decider's server.}
\label{fig:latency}
\end{figure}

\begin{figure}[H]
\centering
\includegraphics[width=0.88\linewidth]{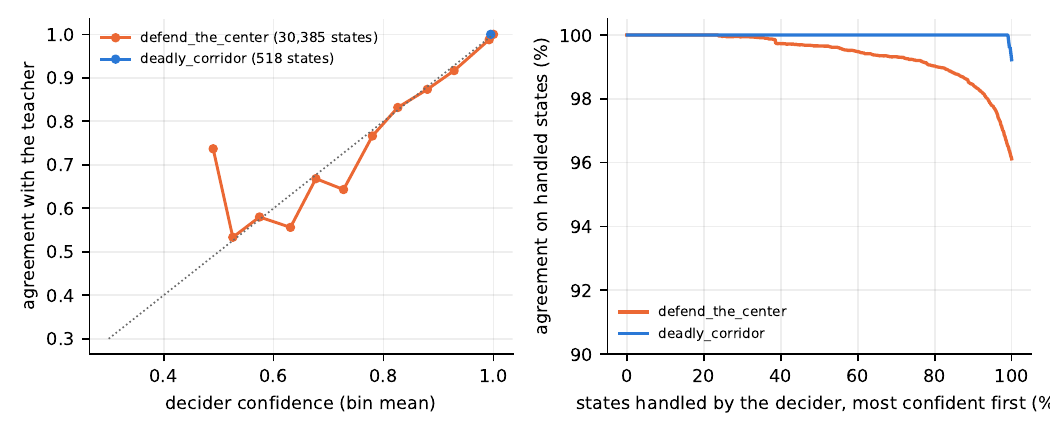}
\caption{Confidence of the original deciders on all held-out teacher states. Left: agreement with the teacher by confidence bin (bins with at least 20 states); the dotted line is perfect calibration. Right: agreement on the states the decider would handle alone when it takes the most confident ones first; the remaining states would be handed to a slower judge (Table~\ref{tab:cascade}).}
\label{fig:calib}
\end{figure}

\begin{table}[H]
\centering
\caption{Confidence thresholds on the validation split (all states). Automated: share of states at or above the threshold; agreement: teacher agreement on those states; handed off: the rest.}
\label{tab:cascade}
\small\renewcommand{\baselinestretch}{1.0}\selectfont
\begin{tabular}{lrrr}
\toprule
Scene and threshold & Automated (\%) & Agreement (\%) & Handed off (\%) \\ \midrule
\multicolumn{4}{l}{\textit{defend\_the\_center (30,385 states)}} \\
none & 100.0 & 96.1 & 0.0 \\
$\ge$ 0.8 & 95.1 & 97.8 & 4.9 \\
$\ge$ 0.9 & 90.8 & 98.4 & 9.2 \\
$\ge$ 0.95 & 84.8 & 98.8 & 15.2 \\
$\ge$ 0.99 & 63.4 & 99.4 & 36.6 \\
\addlinespace
\multicolumn{4}{l}{\textit{deadly\_corridor (518 states)}} \\
none & 100.0 & 99.2 & 0.0 \\
$\ge$ 0.8 & 99.0 & 100.0 & 1.0 \\
$\ge$ 0.9 & 98.7 & 100.0 & 1.4 \\
$\ge$ 0.95 & 97.9 & 100.0 & 2.1 \\
$\ge$ 0.99 & 88.4 & 100.0 & 11.6 \\
\bottomrule
\end{tabular}

\end{table}

\section{Study ledger}
\label{app:ledger}
Table~\ref{tab:ledger} lists every study of the project, including withdrawn, failed and exploratory ones, with the episodes on disk. It is generated by a script (\texttt{study\_ledger.py}) from the preregistration files, the frozen hash lists and the raw result files; no cell is typed by hand. ``Amended'' means that text was appended to the preregistration after it was frozen (results, addenda or a withdrawal note, each marked as such in the file). Episode counts include runs that were repeated after an infrastructure failure. v14 and v14b are withdrawn under the terms of the third-party service they used; their rows keep the episodes on disk and report no results. Their preregistrations committed us to report the results whatever they were, so this withdrawal is a deviation from them.
{\scriptsize\renewcommand{\baselinestretch}{1.0}\selectfont\setlength{\tabcolsep}{3pt}
\begin{longtable}{>{\raggedright\arraybackslash}p{1.0cm}>{\raggedright\arraybackslash}p{4.6cm}c>{\raggedright\arraybackslash}p{1.1cm}r>{\raggedright\arraybackslash}p{1.0cm}>{\raggedright\arraybackslash}p{2.6cm}}
\caption{All studies. Host 1: Windows desktop; host 2: Linux workstation. Scene: defend = defend\_the\_center, corridor = deadly\_corridor. Verdict: the preregistered verdict keys stored in each analysis file.}\label{tab:ledger}\\
\toprule
ID & Study & Host & Scene & Episodes & Frozen text & Verdict \\ \midrule
\endfirsthead
\toprule
ID & Study & Host & Scene & Episodes & Frozen text & Verdict \\ \midrule
\endhead
S1/E1 & first real-time demonstration: paused, tick-converted and asynchronous play & 1 & defend & 320 & yes & see text \\
E2 & competing methods (hybrid, action queue, language models) on one real-time ruler & 1 & defend & 1,365 & amended & see text \\
v3 & asynchronous replication of teacher and decider & 1 & defend & 210 & yes & -- \\
v4 & latency-matched decomposition, three pairs (episodes in v4b: its first three pairs) & 1 & defend & 0 & yes & -- \\
v4b & extension to five pairs; run- and game-level intervals & 1 & defend & 300 & yes & claim of quality component $<$5\,pp allowed: no \\
v5 & on-policy asynchronous DAgger, five triads & 1 & defend & 450 & yes & partial success \\
v6a & calibration of core-pinned concurrent evaluation (failed) & 1 & defend & 90 & amended & -- \\
v6b & replication on 56 distinct games, seed-32 replicate & 1 & defend & 600 & yes & partial success \\
v6b+ & v6b addendum: arms run one at a time after the failed calibration (episodes in v6b) & 1 & defend & 0 & yes & -- \\
v7 & data-volume control & 1 & defend & 200 & yes & not explained by volume \\
v8 & asynchronous latency curve (withdrawn before execution) & 1 & defend & 0 & amended & -- \\
v9 & trace replay of skip schedules (fidelity check failed) & 1 & defend & 5,250 & yes & fidelity check failed; H9-1 supported: no; H9-3 supported: no \\
v10 & sentence-reading teacher: input versus policy share & 1 & defend & 300 & yes & input share small \\
v11 & second scene: decomposition and DAgger & 2 & corridor & 1,460 & yes & inconclusive \\
v11-h1 & second scene on host 1 (addendum) & 1 & corridor & 1,000 & yes & inconclusive \\
v12 & second host, same seeds as v6b & 2 & defend & 500 & yes & partial success \\
v13p & injected-delay pilot, exploratory, no preregistration & 2 & corridor & 175 & -- & -- \\
v14 & withdrawn: third-party service terms; results not reported & 2 & both & 1,500 & yes & not reported \\
v14b & withdrawn: third-party service terms; results not reported & 2 & both & 0 & yes & not reported \\
v15 & action holding for k ticks and stale decisions & 2 & both & 1,800 & yes & negative; positive \\
v16 & controlled-timing emulator: fidelity check and identical-schedule decomposition & 1 & defend & 2,625 & yes & fidelity gate failed; rerun confirmed in v17 \\
v16b & confirmatory fidelity recheck of the emulator & 1 & defend & 75 & yes & fidelity gate failed \\
v17 & emulator fidelity on fresh seeds, censored final interval; identical-schedule rerun & 1 & defend & 75 & yes & fidelity gate passed \\
v18 & live play, one delay schedule imposed on every arm; second DAgger seed; disagreement & 2 & defend & 1,140 & yes & checks passed; all four hypotheses held \\
f1 & conveyor simulator: policy $\times$ injected delay, four concurrent episodes & 2 & conveyor & 600 & yes & see text \\
f1-solo & conveyor simulator: decider one episode at a time, exploratory & 2 & conveyor & 15 & yes & -- \\
corr-att & corridor: learning when to shoot, rounds 1--2 (paused evaluation) & 1 & corridor & 500 & -- & negative \\
corr-att2 & corridor: learning when to shoot, branch-comparison labels & 1 & corridor & 5,000 & -- & negative \\
\bottomrule

\end{longtable}}

\section{Selective automation with a confidence threshold}
\label{app:cascade}
The decider returns a probability for each action. If its largest probability is below a threshold, the decision can be passed to a slower judge. Table~\ref{tab:cascade} gives, for each threshold, the share of held-out teacher states the decider would handle alone and its agreement with the teacher on those states. The measurement is offline (script \texttt{cascade\_coverage.py}, weights a814587e03 and a164d90514); it says how often the decider is confident and right, not how a cascade would play live.

\subsection{Selective automation (exploratory)}
\label{app:from-cascade}

\paragraph{Selective automation (exploratory)} A local decider that returns a confidence can hand only its uncertain cases to a slower judge. On the 30,385 held-out teacher states of defend\_the\_center, O acts alone on 90.8\% of them at a confidence threshold of 0.9, with 98.4\% agreement on those, against 96.1\% on all states; the remaining 9.2\% would go to a larger model or a person (Supplementary Table~\ref{tab:cascade}). These are offline agreements with the teacher, not live scores.

\section{Second environment: a conveyor with a deadline}
\label{app:factory}
LineTempo is a Python conveyor-inspection simulator written before this study for a separate project; we used a frozen version without changing its code. Parts enter at a camera every 0.24\,m on a belt moving at 1\,m/s and reach a diverter 0.4\,m (400\,ms) later; the simulator runs at 50 ticks per second. Each part has a true defect class, severity, size and location. A deterministic perception script, with detection misses, false alarms and blur, turns the state into a sentence such as ``part: family housing | defect scratch | size small | location edge | severity minor | image clear | line: belt normal | \ldots''. The decider receives the sentence over HTTP and chooses PASS, REJECT (with one of four bins), REWORK or ESCALATE; the teacher applies a fixed disposal table to the true state. The factory decider uses the same model family and training code as the Doom decider and was trained on 3,000 teacher-labelled sentences from separate seeds. On 300 held-out sentences (seed 920000) it chose the teacher's operation in 97.7\% of cases, but only 21 of the 26 rejections; this is agreement with a synthetic teacher, not inspection accuracy. A decision that has not arrived when the part reaches the diverter is late and the part passes by default. Cost per part weights an escaped defect 100, a false rejection 20, a line stop 10, an escalation 5 and a rework 1.

\begin{table}[H]
\centering
\caption{Preregistrations, seeds and frozen hashes.}
\label{tab:prereg}
\small
\begin{tabular}{@{}l>{\raggedright\arraybackslash}p{0.42\linewidth}>{\raggedright\arraybackslash}p{0.2\linewidth}>{\raggedright\arraybackslash}p{0.2\linewidth}@{}}
\toprule
ID & Content & Evaluation seeds & Hash \\ \midrule
v3 & zero-latency teacher (T0) and decider, asynchronous replication & 850000--850029 & 299dafd18395 \\
v4 & latency-matched decomposition, three pairs & 850000--850029 & d4c13a3bcfce \\
v4b & extension to five pairs; run-level and game-level intervals & 850000--850029 & a317edf316e7 \\
v5 & on-policy asynchronous distillation, five triads & 850000--850029 & 404982264335 \\
v6a & calibration of core-pinned concurrent evaluation & 850000--850029 & b96753c944f2 \\
v6b & replication with contemporaneous T0 and seed-32 replicate & 860000--860099 & 1ab89caac991 \\
v6b+ & sequential execution after failed calibration & 860000--860099 & d9876210356f \\
v7 & data-volume control & 860000--860099 & 75852f3614ff \\
v9 & trace replay; fidelity check failed (Section~\ref{app:v9}) & 860000--860099 & 3b2f9fcad9d4 \\
v10 & sentence-reading teacher: input versus policy share & 860000--860099 & b8d822e52778 \\
v11 & second scene (deadly\_corridor), host 2; host-1 addendum & 870000--870199 & 59819d9d7802 / 36896cdebded \\
v12 & second host (Linux workstation), defend\_the\_center & 860000--860099 & 01b7e9dd4902 \\
v14 & withdrawn (third-party service terms; results not reported, a deviation) & 860000--860099; 870000--870199 & fcf1420fa783 \\
v14b & withdrawn (third-party service terms; results not reported, a deviation) & 860000--860099; 870000--870199 & 80040d299869 \\
v15 & action holding for $k$ ticks and stale decisions, host 2; resume addendum & 860000--860049; 870000--870199 & 1d15c665c9ce / e85b906894b1 \\
f1 & conveyor simulator, policy $\times$ injected delay, host 2 & 950000--950009 & f194ae8008ed \\
v16 & controlled-timing emulator: fidelity check and identical-schedule decomposition, host 1 & 884000--884024; 860000--860099 & 0139149b996c \\
v16b & confirmatory fidelity recheck of the emulator, host 1 & 885000--885024 & d8abf70af5d6 \\
v17 & emulator fidelity on fresh seeds with the final wait recorded; identical-schedule rerun, host 1 & 886000--886024; 860000--860099 & d70ce2bab648 \\
v18 & live play with one delay schedule imposed on every arm; second DAgger seed; disagreement on own states, host 2 & 887000--887099 & 3a94ac70cf68 \\
\bottomrule
\end{tabular}
\end{table}

The preregistration (hash in Table~\ref{tab:prereg}) crossed the policy (exact teacher; the decider D over live HTTP; an untrained decider; a random policy) with a delay added after each answer arrived, so that delay did not queue later requests: 0, 200, 340, 360, 370, 380, 400 and 600\,ms. Each cell ran 10 episodes of 500 parts on seeds 950000--950009, paired by seed. The teacher was also run with a uniform $\pm$50\,ms jitter on the delay, and both policies were run with the belt waiting for each decision (synchronous mode) at 0, 380 and 600\,ms. Line B adds two part families and a packing buffer; in this configuration its settings otherwise equal line A's and the buffer never filled, so it is close to a repeat of line A. Live runs used four concurrent episodes on host 2.

\begin{table}[H]
\centering
\caption{LineTempo, asynchronous mode: mean cost per part and, in parentheses, late decisions (\%), by injected delay. Teacher: exact teacher; teacher $\pm$50: with $\pm$50\,ms uniform jitter; D: the decider over live HTTP, four concurrent episodes; D, solo: exploratory rerun of D one episode at a time (5 seeds). Other cells: 10 episodes of 500 parts.}
\label{tab:factory}
\footnotesize
\setlength{\tabcolsep}{3.5pt}
\begin{tabular}{@{}rllllll@{}}
\toprule
 & \multicolumn{3}{c}{Line A} & \multicolumn{2}{c}{Line B} & Line A \\
\cmidrule(lr){2-4}\cmidrule(lr){5-6}\cmidrule(l){7-7}
$\delta$ (ms) & teacher & teacher $\pm$50 & D & teacher & D & D, solo \\
\midrule
0 & 0.03 (0.0) & 0.03 (0.0) & 0.88 (0.0) & 0.03 (0.0) & 0.88 (0.0) & -- \\
200 & 0.03 (0.0) & 0.03 (0.0) & 0.88 (0.0) & 0.03 (0.0) & 0.88 (0.0) & -- \\
340 & 0.03 (0.0) & 0.85 (9.6) & 1.37 (7.3) & 0.03 (0.0) & 0.90 (0.2) & 0.92 (0.2) \\
360 & 0.03 (0.0) & 2.06 (30.4) & 1.83 (13.1) & 0.03 (0.0) & 2.99 (31.2) & 0.88 (0.4) \\
370 & 0.03 (0.0) & 3.16 (40.0) & 1.25 (6.0) & 0.03 (0.0) & 1.38 (6.7) & 0.88 (0.3) \\
380 & 0.03 (0.0) & 4.00 (51.0) & 7.52 (100.0) & 0.03 (0.0) & 7.50 (100.0) & -- \\
400 & 7.52 (100.0) & 5.43 (69.9) & 7.52 (100.0) & 7.50 (100.0) & 7.50 (100.0) & -- \\
600 & 7.52 (100.0) & 7.52 (100.0) & 7.52 (100.0) & 7.50 (100.0) & 7.50 (100.0) & -- \\
\midrule
\multicolumn{3}{@{}l}{untrained decider, $\delta=0$} & 5.00 (0.0) & & & \\
\multicolumn{3}{@{}l}{untrained decider, $\delta=600$} & 7.52 (100.0) & & & \\
\multicolumn{3}{@{}l}{random policy} & 8.87 (0.0) & & & \\
\bottomrule
\end{tabular}

\end{table}

Table~\ref{tab:factory} lists every asynchronous cell. The preregistered prediction that the decider would change sharply between 370 and 380\,ms failed: with four concurrent episodes it was late on 7.3\% of parts at 340\,ms (line A), and on up to 82\% in single episodes near the deadline. A probe of the server alone ruled out a transport stall. The exploratory rerun of the decider at 340, 360 and 370\,ms one episode at a time (seeds 950000--950004) gave mean late shares of 0.2--0.4\% (0.2--0.8\% in single episodes) and round-trip p99 of 4.1--4.3\,ms, against 66.7\,ms over the concurrent runs.

In synchronous mode, where the belt waits for each decision, delay barely changed the cost of either policy (decider 0.88, 0.88 and 0.90 per part at 0, 380 and 600\,ms; exact teacher 0.03 throughout), but it cut throughput: the decider line passed 148.7, 76.5 and 59.8 parts per minute and the teacher line 150.0, 78.9 and 61.2, against 249.7 in asynchronous mode. Waiting for each decision thus moves the cost of time from the score into throughput.

\subsection{Deadline edge and host latency in the conveyor simulator}
\label{app:from-factory}

The deadline also included the host. We had predicted the decider's edge between 370 and 380\,ms, where its round trip (median 3.7\,ms) and rounding to the next 20\,ms tick first push answers past 400\,ms. With four episodes running at once on host 2, round trips had a heavy tail (p99 66.7\,ms), and the decider was already late on 7.3\% of parts at 340\,ms. In an exploratory rerun with one episode at a time, mean late shares fell to 0.2--0.4\% (0.2--0.8\% in single episodes) and p99 to 4.1--4.3\,ms. As in Doom, the host sets the time component; with a deadline, a change in the latency tail moves the decider over a cliff rather than along a slope.

\subsection{Simulator checks (exploratory)}
\label{app:factory-checks}

Before the preregistered study, the Python reference of the simulator passed its 58 reference tests. On line B, in a check configuration that differs from the preregistered study (defect rate 0.15 rather than 0.08, 50 parts per episode), we ran one episode on the same seed for each of three policies. The exact teacher, run in process, let none of the 5 defective parts escape. The decider D over live HTTP, with a median decision time of 20\,ms, let 1 escape. A mock slow model whose first answer takes 600\,ms, also over HTTP, was late on every decision and let all 5 escape. These single episodes check the pipeline; they are not estimates. Figure~\ref{fig:env}b is a frame of the decider's episode, replayed from its log in the simulator's viewer.

\begin{figure}[H]
\centering
\includegraphics[width=\linewidth]{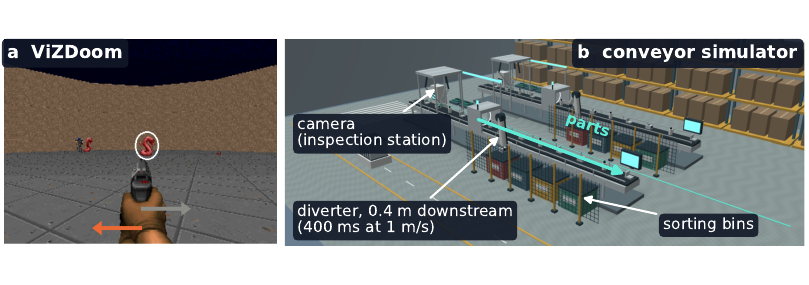}
\caption{(a) ViZDoom from the player's view, with an opposite-turn error of Supplementary Figure~\ref{fig:qual} (gray arrow teacher, orange arrow decider). (b) The conveyor simulator of Section~\ref{sec:factory}, a frame of a logged 50-part run of the decider on line B. A camera inspects each part; a diverter 0.4\,m downstream (400\,ms at 1\,m/s) sorts it.}
\label{fig:env}
\end{figure}

\begin{figure}[H]
\centering
\includegraphics[width=\linewidth]{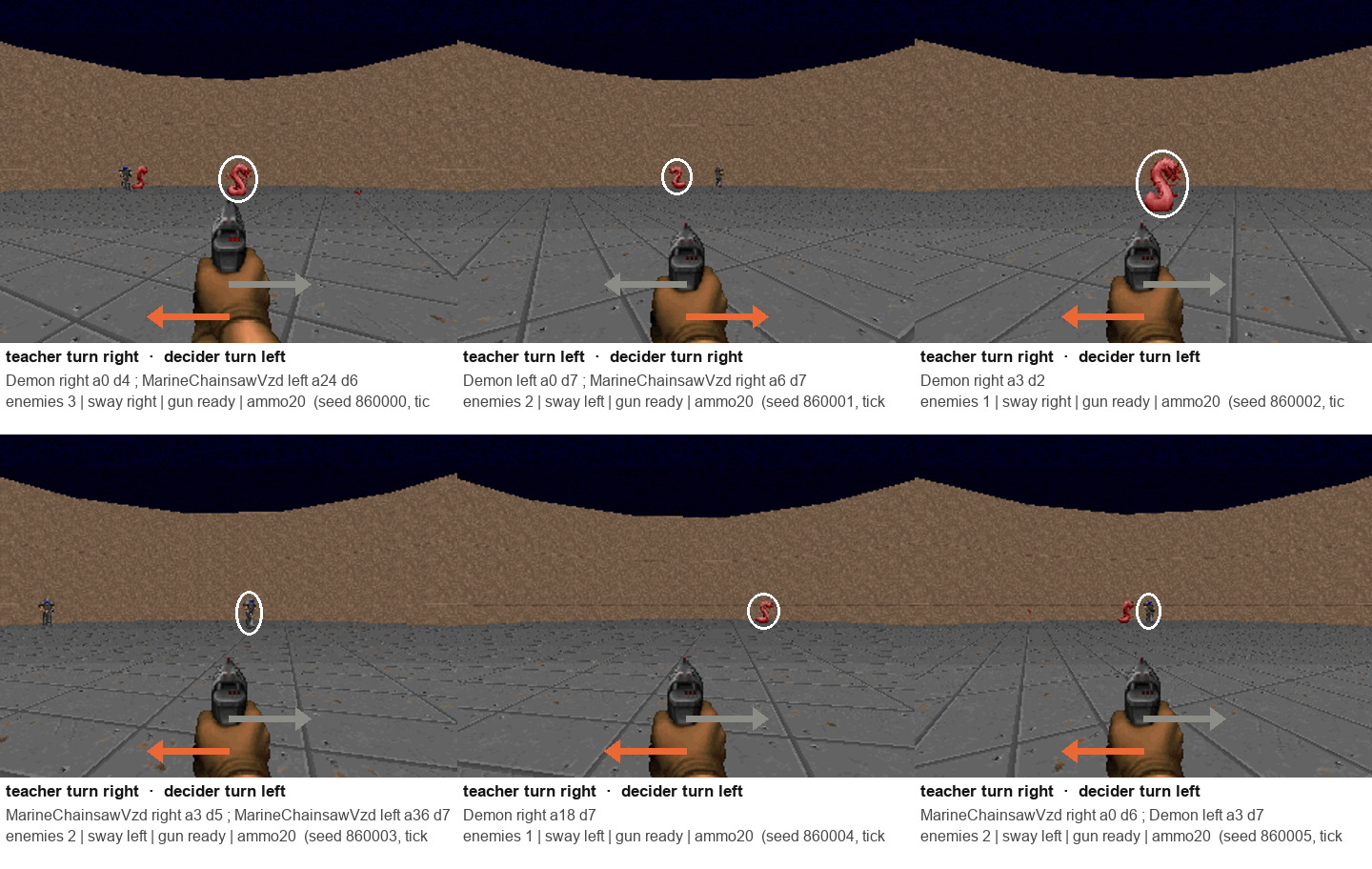}
\caption{First observed opposite-turn errors of the original decider, one per game: the teacher turns toward the nearest enemy and the decider turns away. Key: gray arrow teacher, orange arrow decider, white circle the nearest enemy; in the state sentence, a is the angle bin in degrees and d the distance bin in 100-unit steps. Illustrative, not a measurement: the first opposite-turn decision with ammunition left in each of the first six games from seed 860000, under synchronous replay with 10\% i.i.d.\ skipped ticks. The frame is shown for the reader; the decider itself received only the state sentence printed below it. }
\label{fig:qual}
\end{figure}

\section{Injected-delay pilot}
\label{app:v13}
An exploratory pilot on host 2 (25 seeds of deadly\_corridor, 890000--890024, disjoint from all evaluations; no preregistration) added a fixed delay after every call of S and O to the decider's server. Table~\ref{tab:v13} shows mean reward rising with the delay, for the teacher and the decider alike, as more ticks pass under the previous action. The delay here makes the decision stale; the preregistered study of \emph{holding} a freshly chosen action for $k$ ticks (v15) is reported in Section~\ref{sec:hosts}.

\begin{table}[H]
\centering
\caption{Injected-delay pilot, deadly\_corridor, host 2 (exploratory). T0 has no server call and is the same at every delay.}
\label{tab:v13}
\small\renewcommand{\baselinestretch}{1.0}\selectfont
\begin{tabular}{rrrrrr}
\toprule
Delay (ms) & T0 & S & O & S skipped (\%) & O skipped (\%) \\ \midrule
0 & 61.6 & 61.6 & 63.0 & 1.6 & 1.6 \\
30 & 61.6 & 89.4 & 91.5 & 48.6 & 48.6 \\
60 & 61.6 & 82.4 & 84.5 & 63.1 & 63.1 \\
\bottomrule

\end{tabular}
\end{table}

\section{Reproducing the studies}
\label{app:repro}
Every study is run by one shell script per host that freezes the hashes of its preregistration, runner, analysis script and decider weights before the first seed, runs the arms one at a time in the preregistered order, and calls the analysis script at the end. The evaluators are \texttt{arms\_s1.py} (defend\_the\_center) and \texttt{arms\_corr.py} (deadly\_corridor) in the asynchronous mode; the decider is served by \texttt{doom\_server.py}; injected delay and action holding are wrappers (\texttt{inject\_delay.py}, \texttt{hold\_ticks.py}) that leave the evaluators unchanged. Analyses (\texttt{analyze\_v4b.py} to \texttt{analyze\_v15.py}) run a self-test on synthetic positive and negative controls before reading results. The study ledger, the tables of this supplement and the figures are regenerated from the result files by \texttt{study\_ledger.py}, \texttt{make\_tables\_v021.py} and \texttt{make\_figs\_v4.py} to \texttt{make\_figs\_v6.py}. When a host failed during a run, the remaining jobs were rerun from their first seed under an addendum frozen before resumption (v10, v11 host 1, v15).

\section{Experimental environment}
\label{app:env}
Table~\ref{tab:env} lists the hosts, software and models. The scene-1 studies v3--v10 (defend\_the\_center) ran on host 1; the second scene (deadly\_corridor, v11) and the second-host replication of scene 1 (v12) ran on host 2. The decider and the game engine run on the CPU; the GPU of host 1 served only the open-weight language-model comparator.

\section{Preregistrations}
\label{app:prereg}
Table~\ref{tab:prereg} lists the preregistrations used in this paper with the first 12 hexadecimal digits of the SHA-256 hash of each text as frozen before execution (results were appended to some texts afterwards, in a separate section). Decider weights: original a814587e03, after one DAgger round 240805651d, after two rounds 8ce6b41475, seed-32 replicate 2f4785728c.

\end{document}